\newtheorem{theorem}{Theorem}
\newtheorem{lemma}{Lemma}
\begin{document}

\title{Compound Rank-$k$ Projections \\for Bilinear Analysis}

\author{Xiaojun~Chang,
        Feiping~Nie,
        Sen~Wang,
        Yi~Yang,~\IEEEmembership{Member,~IEEE,}
        Xiaofang~Zhou,~\IEEEmembership{Senior Member,~IEEE,}
        and~Chengqi~Zhang,~\IEEEmembership{Senior Member,~IEEE}
\thanks{This work was partially supported by the National Program on Key Basic Research Project (973 Program) under Grant 2015CB352302, partially supported by the ARC DECRA project under Grant DE130101311 and partially supported by the National Natural Science Foundation of China under Grant 61303143.}
\thanks{X. Chang, Y. Yang and C. Zhang are with Center for Quantum Computation and Intelligent Systems, University of Technology Sydney, Australia. Email: \{uqxchan1@cs.cmu.edu,yi.yang@uts.edu.au, chengqi.zhang@uts.edu.au\}}
\thanks{F. Nie is with Center for OPTical Imagery Analysis and Learning, Northwestern Polytechnical University, Shaanxi, China. Email: feipingnie@gmail.com}
\thanks{S. Wang and X. Zhou are with the school of Information Technology and Electrical Engineering, The University of Queensland, Queensland 4072. Email: \{sen.wang\}@itee.uq.edu.au}
}

\markboth{Journal of \LaTeX\ Class Files,~Vol.~X, No.~X, XXXXXXX~20XX}%
{Chang \MakeLowercase{\textit{et al.}}: Bare Demo of IEEEtran.cls for Computer Society Journals}

\IEEEcompsoctitleabstractindextext{%
\begin{abstract}
In many real-world applications, data are represented by matrices or high-order  tensors. Despite the promising performance, the existing two-dimensional discriminant analysis algorithms employ a single projection model to exploit the discriminant information for projection, making the model less flexible. In this paper, we propose a novel Compound Rank-$k$ Projection (CRP) algorithm for bilinear analysis. CRP deals with matrices directly without transforming them into vectors, and it, therefore, preserves the correlations within the matrix and decreases the computation complexity. Different from the existing two-dimensional discriminant analysis algorithms, objective function values of CRP increase monotonically. In addition, CRP utilizes multiple rank-$k$ projection models to enable a larger search space in which the optimal solution can be found. In this way, the discriminant ability is enhanced. We have tested our approach on five datasets, including UUIm, CVL, Pointing'04, USPS and Coil20. Experimental results show that the performance of our proposed CRP performs better than other algorithms in terms of classification accuracy.
\end{abstract}

\begin{keywords}
Discriminant Analysis, Feature Extraction, Rank-k Projection, High-order Representation
\end{keywords}}

\maketitle

\IEEEdisplaynotcompsoctitleabstractindextext

\IEEEpeerreviewmaketitle

\section{Introduction}
\IEEEPARstart{L}{inear} discriminant analysis (LDA), also known as Fisher's Linear Discriminant (FLD), is a classical method for data representation and feature extraction. FLD is commonly utilized in the fields of computer vision and pattern recognition \cite{bib_Jieping,bib_Zou,bib_Yang_Wen,LRLDA}. For example, Peter N. Belhumeur et al. \cite{bib_Belhumeur} use LDA to represent facial expression images efficiently. The classical LDA aims to find a set of vectors so as to maximize the trace of between-class scatter matrix while minimizing the trace of within-class scatter matrix in the transformed feature space. Recent works have indicated that it is more natural and beneficial to represent an image with a matrix since exploiting the neighborhood information of a certain pixel is essential to the performance. The classical LDA, however, requires that an image is represented by a vector. This vectorization method has some inherent drawbacks. Firstly, it will erase the correlations within the matrix. Secondly, the data dimensionality increases when we transform the matrix representation into the vector representation \cite{bib_Ma,HouNYW13}. Hence, the computational burden is increased dramatically. 

To address these problems, two-dimensional linear discriminant analysis and its variants have been studied over the last decade. Instead of vectorizing the matrices before dimension reduction, the two-dimensional linear discriminant analysis works with data in a matrix representation, which can preserve the spatial correlation of the original data and reduce the computation complexity. Liu et al. \cite{Algebraicfs} propose to use an optimal discriminant criterion to extract algebraic features, calculating a set of optimal discriminant projection vectors according to a generalized Fisher criterion function. The classical 2DLDA, proposed by Ye et al. \cite{bib_Ye}, aims to learn a single set of projection matrices and introduces an iterative algorithm. In their experimental results, performance is stable when the number of iteration increases. Thus, only one iteration step is required in their experiments.

Inoue et al. \cite{bib_Inoue}, however, have pointed out the iterative algorithm proposed in \cite{bib_Ye} does not necessarily guarantee the monotonicity of the objective function value, which is mainly caused by the singularity of the between-class scatter matrix. In \cite{bib_Inoue}, they present a simple method, namely Selective Algorithm for 2DLDA, to select transformation matrices with a higher discriminant ability. In addition to the proposed algorithm, they also propose a non-iteratively parallel algorithm, which transforms rows and columns of the matrices independently. Although this method seems promising from the theoretical perspective, their experimental results show that its recognition rate is a little worse than the classical 2DLDA.

Another limitation of all the existing discriminant analysis algorithms  \cite{bib_Jieping,bib_Zou,bib_Yang_Wen,bib_Belhumeur,bib_fan,bib_Franck,bib_Ye,bib_Xiong,bib_YangYi} is that their performance suffer from the balance between the degree of freedom and the avoidance of the over-fitting problem. Specifically, LDA produces multiple full rank bilinear projections, which has the largest degree of freedom but induces the well-known over-fitting problem. To overcome this problem, the authors \cite{rank1} has proposed multiple rank-1 projection method based on the principal component. Its experimental results demonstrate that this method is at its best when the number of training samples is very small. The classical 2DLDA produces multiple rank-1 projections and has better performance in dealing with the over-fitting problem, but has a much smaller degree of freedom. Motivated by these observations, we intend to increase the degree of freedom while avoiding the over-fitting problem.  

In this paper, we aim to solve the above limitations of the existing discriminant analysis algorithms for high-order data and propose a compound rank-$k$ projection algorithm for discriminant bilinear analysis. Different from \cite{bib_Ye}, the convergence of our optimization approach is explicitly guaranteed. We adopt multiple orthogonal projection models to obtain more discriminant projection directions. In particular, we use $h$ sets of projection matrices to find a low-dimensional representation of the original data. The $h$ projection matrices are orthogonal to each other. By doing so, we can project the original data into different orthogonal basis and information from various perspectives can be obtained. The key novelty of our method is that it adopts multiple projection models, which are integrated and work collaboratively. In this way, a larger search space is provided to find the optimal solution, which will yield better classification performance.

We name the proposed algorithm as Compound Rank-k Projection for Bilinear Analysis (CRP). It is worthwhile noting that the algorithm can be readily extended to high-order tensor discriminant analysis. The main contributions of our work can be summarized as follows:
\begin{enumerate}
\item CRP can deal with matrix representations directly without converting them into vectors. Hence, spatial correlations within the original data can be preserved. Compared with the conventional algorithms, the computation complexity is reduced.  

\item Compared to the classical 2-dimensional linear discriminant analysis methods \cite{bib_Ye,bib_Xiong,bib_Li}, CRP benefits from the trade-off between the degree of freedom and the avoidance of the over-fitting problem.

\item Although the classical 2DLDA gains good performance, its iterative optimization algorithm may not converge due to the singularity of the between-class scatter matrix. Differently, the convergence of our algorithm is explicitly guaranteed. 
\end{enumerate}

The rest of this paper is organized as follows: Section 2 summarizes an overview of the classical LDA as well as 2DLDA. A novel compound rank-$k$ projection for bilinear analysis is proposed in section 3. We present our experimental results on five different datasets in section 4. The conclusion of our work is discussed in section 5.

\section{Related work}
\subsection{Classical LDA}
The conventional LDA aims to project the original high-dimensional data to a lower dimensional subspace for better classification performance \cite{bib_Deng,bib_Bingbing,YanLRS13,YanRLSS14}. The original dataset is denoted as $X \in {\mathbb{R}}^{l \times n}$, which is grouped into $c$ classes $\pi = \{ \pi_1, \pi_2,..., \pi_c \}$. $\pi_i$ contains $n_i$ data points from the $i$-th class. The transformation of the classical LDA to a lower dimensional subspace is $y_i = W^Tx_i$, where $W \in \mathbb{R}^{l \times c}$ and $x_i$ is a vector representation of the original data. By finding the best transformation matrix $W$, data points from different classes become more separated while data points from the same class become more compact after the transformation \cite{bib_Kim,bib_Bandos,WuWYZN10,HanWJZY10,NieXSZ09}. In this way, better classification performance is achieved.

More specifically, two scatter matrices in LDA, namely between-class matrix $S_b$ and within-class matrix $S_w$, are defined as follows:
\begin{equation}\nonumber
S_b=\sum_{i=1}^c n_i(M_i-M)(M_i-M)^T
\end{equation} and 
\begin{equation}\nonumber
S_w=\sum_{i=1}^c \sum_{X_j \in \pi_i} (X_j-M_i)(X_j-M_i)^T,
\end{equation} where $n_i$ is the number of data samples in the $i$-th class and $X_j$ is the $j$-th sample in the $i$-th class. $M_i=\frac{1}{n_i} \sum_{X_j \in \pi_i} X_j$ is the mean of the $i$-th class, and $M=\frac{1}{n}\sum_{i=1}^c \sum_{X_j \in \pi_i} X_j$ is the global mean.

In a lower dimensional subspace, the between-class scatter matrix and the within-class scatter matrix are transformed to $\tilde{S_b}=W^TS_bW$ and $\tilde{S_w}=W^TS_wW$, respectively, according to \cite{bib_Hou}. The objective function is defined as follows:
\begin{small}
\begin{equation}\label{opt_LDA}
\max_{W} Tr((W^TS_wW)^{-1}(W^TS_bW)),
\end{equation}
\end{small}where $Tr(\cdot)$ denotes the matrix trace operation. The objective function aims to find the best $W$ to maximize the trace of the transformed between-class scatter matrix $\widetilde{S_b}$  and minimize the trace of the transformed within-class scatter matrices $\tilde{S_w}$. This objective function can be solved by eigen-decomposition of $(S_w)^{-1}S_b$ in \cite{bib_Stuhlsatz,bib_Zafeiriou,bib_Jia}. However, as we mentioned in section 1, the classical LDA has some inherent drawbacks. To tackle these problems, two-dimensional LDA has been proposed.

\subsection{Classical 2DLDA}

Different from the classical LDA, 2DLDA uses the matrix representation instead of the vector representation. We denote a set of data as $\textbf{X}=\{X_1, X_2, ..., X_n\}, X_i\in \mathbb{R}^{l_1 \times l_2}$, which are grouped into $c$ different classes $\pi_1, ..., \pi_c$. The goal of 2DLDA is to seek a single set of transformation matrices, $U$ and $V$, projecting the original data into a lower dimensional subspace \cite{bib_Yang}. In this subspace, the two newly transformed matrices can be computed as follows:
\begin{equation}\nonumber
\widetilde{S_w}=\sum_{i=1}^c \sum_{X_j \in \pi_i}U^T(X_j-M_i)VV^T(X_j-M_i)^TU
\end{equation}

\begin{equation}\nonumber
\widetilde{S_b}=\sum_{i=1}^c n_iU^T(M_i-M)VV^T(M_i-M)^TU
\end{equation}

The objective function is defined as follows:
\begin{equation}
\max_{U, V} f = \max_{U, V} Tr((\widetilde{S_w})^{-1} \widetilde{S_b})
\label{of2d}
\end{equation}

As mentioned before, the key idea of the classical 2DLDA is to find the optimal $U$ and $V$ which maximize the objective function value $f$ in \eqref{of2d}. Since it is difficult to compute the optimal $U$ and $V$ simultaneously, Ye et al. \cite{bib_Ye} propose an iterative algorithm. However, Inoue et al. \cite{bib_Inoue} have pointed out that this iterative algorithm cannot guarantee the monotonicity of the objective function value $f$ and it is hard to determine appropriate termination criteria. To promise the monotonicity, the authors adopt trace ratio and trace difference in \cite{optimaldim}.

\section{Compound Rank-$k$ Projection}
In this section, we describe in detail our proposed algorithm. We define a set of data points as $\textbf{X} = \{X_1, X_2, ..., X_n\}, X_i\in \mathbb{R}^{l_1 \times l_2}$, which is grouped into $c$ different classes $\{\pi_1, ..., \pi_c\}$. In contrast to the classical 2DLDA, our proposed approach aims to seek $h$ optimal sets of $\mathcal{U}=\{U_1, U_2, ..., U_h\}, U_i \in \mathbb{R}^{l_1 \times k}$ and $\mathcal{V}=\{V_1, V_2, ..., V_h\}, V_i \in \mathbb{R}^{l_2 \times k}$ to project the original data points into $h$-dimensional subspaces: 
\begin{equation}\nonumber
\widetilde{X_i} = \{Tr(U_1^TX_iV_1), Tr(U_2^TX_iV_2), ..., Tr(U_h^TX_iV_h)\}
\end{equation}
The primary goal of our approach is that it employs multiple models to provide a larger space to find the optimal solution. In this way, the degree of freedom is increased. To put it from another way, we enhance the discriminant ability.

It is worth noticing that there is a trade-off between the degree of freedom and the avoidance of the over-fitting problem. To be more specific, when we allow the rank of bilinear projection $UV^T$ to be full-rank, the proposed algorithm can be reduced to the classical LDA, which has the largest degree of freedom but induces the well-known over-fitting problem. The classical 2DLDA has better performance in dealing with the over-fitting problem but has a much smaller degree of freedom. Compared with these algorithms, our proposed algorithm can benefit from this trade-off. 

Suppose we have extracted the first $p-1$ dimensional features, now we begin to extract the $p$-th dimension and make it orthogonal to the first $p-1$ dimensions. First we perform an orthogonal transformation on the data:

\begin{small}
\begin{equation}\nonumber
vec(X) \leftarrow vec(X)-vec(U_{p-1}V_{p-1}^T)(vec(U_{p-1}V_{p-1}^T))^Tvec(X)
\end{equation}
\end{small}

The key point of this orthogonal transformation is that we can project the original data into different orthogonal basis so as to get various information from differing perspectives.

According to Lemma \ref{lemma1} in Appendix, we obtain:
\begin{small}
\begin{equation}
vec(X) \leftarrow vec(X)-Tr(U_{p-1}^TXV_{p-1})vec(U_{p-1}V_{p-1}^T)
\label{eq_x}
\end{equation}
\end{small}
Then we compute the optimal solution of the following objective function for the transformed data.

\begin{small}
\begin{equation}\nonumber
\max \sum_{p=1}^h \frac{ \sum\limits_{i=1}^c Tr(U_p^T(\overline{X_i} - \overline{X})V_p)^2}{\sum\limits_{i=1}^c \sum\limits_{X_j \in \pi_i} Tr(U_p^T(X_j- \overline{X_i})V_p)^2},
\end{equation}
\end{small}
\begin{equation}\nonumber
\text{s.t}.~~vec(U_pV_p^T)^Tvec(U_pV_p^T)=1
\end{equation}

In order to avoid over-fitting and singularity of the within-class scatter matrix, a regularization term is added to the objective function. We can rewrite the objective function as follows.

\begin{small}
\begin{equation}
\max \sum_{p=1}^h \frac{\sum\limits_{i=1}^cTr(U_p^T(\overline{X_i} - \overline{X})V_p)^2}{\sum\limits_{i=1}^c \sum\limits_{X_j \in \pi_i} Tr(U_p^T(X_j- \overline{X_i})V_p)^2 + \lambda Tr(U_pV_p^TV_pU_p^T)},
\end{equation}
\end{small}
\begin{equation}\nonumber
\text{s.t.}~~vec(U_pV_p^T)^Tvec(U_pV_p^T)=1
\end{equation}

According to Lemma \ref{lemma3} in the appendix, the equation can be transformed to
\begin{footnotesize}
\begin{equation}
\max \sum_{p=1}^h \frac{\sum\limits_{i=1}^c Tr(U_p^T(\overline{X_i}-\overline{X})V_p)^2}{\sum\limits_{i=1}^c \sum\limits_{X_j \in \pi_i} Tr(U_p^T(X_j-\overline{X_i})V_p)^2 + \lambda Tr(U_pV_p^TV_pU_p^T)},\label{eq_opt}
\end{equation}
\end{footnotesize}
\begin{equation}\nonumber
\text{s.t.}~~Tr(U_p^TU_pV_p^TV_p)=1
\end{equation}

The optimal $U_p$ and $V_p$ would maximize the objective function. Since it is difficult to compute the optimal $U_p$ and $V_p$ simultaneously, we present an iterative algorithm. To be more specific, for a fixed $V_p$, we can obtain the optimal $U_p$ by solving the optimization problem that is quite similar to Eq. \eqref{of2d}. Afterwards, $V_p$ is similarly updated by using the obtained $U_p$. Note that our algorithm can promise a monotonic increase of the objective function. 

\section{optimization}
In this section, we propose an iterative approach to optimizing the objective function in \eqref{eq_opt}. Specifically, for a fixed $V_p$, the objective function equals to:

\begin{scriptsize}
\begin{equation}\nonumber
\footnotesize
\max \frac{\sum\limits_{i=1}^c Tr(U_p^T(\overline{X_i}-\overline{X})V_p)^2}{\sum\limits_{i=1}^c \sum\limits_{X_j \in \pi_i}Tr(U_p^T(X_j-\overline{X_i})V_p)^2 + \lambda Tr(U_pV_p^TV_pU_p^T)}
\end{equation}
\begin{equation}\nonumber
\text{s.t.}~~Tr(U_p^TU_pV_p^TV_p)=1
\end{equation}
\end{scriptsize}

Since there is no straightforward solution to this function, we aim to rewrite the objective function to a generalized eigen-decomposition problem. According to Lemma \ref{lemma4} and Lemma \ref{lemma6} in the appendix, we can rewrite the objective function as follows:
\begin{scriptsize}
\begin{equation}\nonumber
\footnotesize
 \max \frac{\sum\limits_{i=1}^c(u_p^T(I \otimes (\overline{X_i}-\overline{X}))v_p)^2}{\sum\limits_{i=1}^c \sum\limits_{X_j \in \pi_i}(u_p^T (I \otimes (X_j- \overline{X_i}))v_p)^2 + \lambda Tr(U_pV_p^TV_pU_p^T)}
\end{equation}
\begin{equation}\nonumber
\text{s.t.}~~u_p^TD_p^vu_p=1
\end{equation}
\end{scriptsize}

\begin{equation}
\Leftrightarrow \max \frac{u_p^TM_p^vu_p}{u_p^TN_p^vu_p},
\label{eq_v}
\end{equation}
\begin{equation}\nonumber
\text{s.t.}~~u_p^TD_p^vu_p=1
\end{equation}
where 
\begin{equation}
u_p = vec(U_p)
\label{value_uki}
\end{equation}

\begin{equation}
v_p = vec(V_p)
\label{value_vki}
\end{equation}

\begin{equation}
D_p^v = (V_p^TV_p) \otimes I
\label{value_dkv}
\end{equation}

\begin{equation}
M_p^v = \sum_{i=1}^c (I \otimes (\overline{X_i}-\overline{X}))v_pv_p^T(I \otimes (\overline{X_i}-\overline{X})^T)
\label{value_mkv}
\end{equation}

\begin{small}
\begin{equation}
N_p^v = \sum_{i=1}^c \sum_{X_j \in \pi_i} (I \otimes (X_j- \overline{X_i}))v_pv_p^T(I \otimes(X_j-\overline{X_i})^T) + \lambda D_p^v
\label{value_nkv}
\end{equation}
\end{small}

It is noticed that the rewritten objective function in \eqref{eq_v} has become similar to the optimization problem in \eqref{opt_LDA}. Therefore, we can compute the optimal $U_p$ by solving the optimization problem in \eqref{eq_v} as follows:
\begin{equation}
u_p= \frac{q}{\sqrt{q^TD_p^vq}},
\label{value_ukv}
\end{equation} where $q$ is the largest eigenvector of $(N_p^v)^{-1}M_p^v$.

Next, we compute the optimal $V_p$ for a fixed $U_p$.
\begin{scriptsize}
\begin{equation}\nonumber
\max \frac{\sum\limits_{i=1}^c Tr(U_p^T(\overline{X_i}-\overline{X})V_p)^2}{\sum\limits_{i=1}^c \sum\limits_{X_j \in \pi_i} Tr(U_p^T(X_j-\overline{X_i})V_p)^2 + \lambda Tr(U_pV_p^TV_pU_p^T)}
\end{equation}

\begin{equation}\nonumber
\text{s.t.}~~Tr(U_p^TU_pV_p^TV_p)=1
\end{equation}
\end{scriptsize}

With similar procedures, we rewrite the objective function to an eigen-decomposition problem.
According to Lemma \ref{lemma4} and Lemma \ref{lemma6} in Appendix, the objective function can be transformed into:
\begin{scriptsize}
\begin{equation}
\nonumber
\max \frac{\sum\limits_{i=1}^c (v_p^T(I \otimes (\overline{X_i}-\overline{X})^T)u_p)^2}{\sum\limits_{i=1}^c \sum\limits_{X_j \in \pi_i} (v_p^T(I \otimes (X_j - \overline{X_i})^T)u_p)^2 + \lambda Tr(U_pV_p^TV_pU_p^T)}
\end{equation}
\end{scriptsize}
\begin{equation}\nonumber
\text{s.t.}~~v_p^TD_p^uv_p=1
\end{equation}

\begin{equation}
\Leftrightarrow \max \frac{v_p^TM_p^uv_p}{v_p^TN_p^uv_p},
\label{eq_u}
\end{equation}
\begin{equation}\nonumber
\text{s.t.}~~v_p^TD_p^uv_p=1
\end{equation}
where
\begin{equation}
D_p^u=(U_p^TU_p) \otimes I      
\label{value_dku}                                                      
\end{equation}

\begin{equation}
M_p^u = \sum_{i=1}^c (I \otimes (\overline{X_i}-\overline{X})^T)u_pu_p^T(I \otimes (\overline{X_i}-\overline{X}))
\label{value_mku}
\end{equation}

\begin{small}
\begin{equation}
N_p^u = \sum_{i=1}^c \sum_{X_j \in \pi_i} (I \otimes (X_j-\overline{X_i})^T)u_pu_p^T(I \otimes (X_j - \overline{X_i})) + \lambda D_p^u
\label{value_nku}
\end{equation}
\end{small}

Similarly, we can compute the optimal $V_p$ by solving the optimization problem in Eq. \eqref{eq_u}. The solution is 
\begin{equation}
v_p = \frac{q}{\sqrt{q^TD_p^uq}},
\label{value_vk}
\end{equation} where $q$ is the largest eigenvector of $(N_p^v)^{-1}M_p^v$.

The optimizations of $U_p|_{p=1}^h$ and $V_p|_{p=1}^h$ are iterated until convergence. Pseudo-code for our proposed CRP is given in Algorithm $1$. We set $k=2$ empirically. The most time-consuming steps are eigenvalue decomposition operations (line 10 and line 16 in Algorithm 1) and the total time complexity is $O((k\max(l_1,l_2))^3)$. For example, we use Coil20 dataset, in which the dimensionality of the original data is $32 \times 32$. The time complexity of CRP is $O((2 \times 32)^3)$, which is significantly small compared to that of the classical LDA \cite{bib_Ye} $O((1024)^3)$.

\begin{algorithm}
\caption{Optimization Algorithm for CRP}
\SetAlgoLined
\KwData{$\textbf{X} = \{X_1, X_2, ..., X_n\}, X_i\in \mathbb{R}^{l_1 \times l_2}$}
\KwResult{$U_p|_{p=1}^h$ and $V_p|_{p=1}^h$}
\For{$p \leftarrow 1$ \KwTo $h$}{
\emph{Compute the mean $\overline{X_i}$ of the $i$th class for each $i$}\;
\emph{Compute the global mean $\overline{X}$}\;
\emph{Initialise $V_p$ as I(c,k)}\;
\Repeat{Convergence}{
\emph{Update $v_p$ using Eq. \eqref{value_vki}}\;
\emph{Update $D_p^v$ using Eq. \eqref{value_dkv}}\;
\emph{Update $M_p^v$ using Eq. \eqref{value_mkv}}\;
\emph{Update $N_p^v$ using Eq. \eqref{value_nkv}}\;
\emph{p=the largest eigenvector of $(N_p^v)^{-1}M_p^v$}\;
\emph{Update $u_p$ using Eq. \eqref{value_ukv}}\;
\emph{$U_p=\text{reshape}(u_p,r,k)$}\;
\emph{Update $D_p^u$ using Eq. \eqref{value_dku}}\;
\emph{Update $M_p^u$ using Eq. \eqref{value_mku}}\;
\emph{Update $N_p^u$ using Eq. \eqref{value_nku}}\;
\emph{p=the largest eigenvector of $(N_p^u)^{-1}M_p^u$}\;
\emph{Update $v_p$ using Eq. \eqref{value_vk}}\;
\emph{$V_p=\text{reshape}(v_p, c, k)$}\;
}
\emph{Update the training data according to Eq. \eqref{eq_x}}
}
\end{algorithm}

\section{convergence analysis}

In this section, we show that Algorithm 1 converges monotonically and thus we can obtain the local optima of $U_j|_{j=1}^h$ and $V_j|_{j=1}^h$. We prove that Algorithm 1 converges by the following theorem.

\begin{theorem}
\label{theorem7}
The value of objective function $f$ of our proposed algorithm promises to increase monotonically until convergence.
\end{theorem}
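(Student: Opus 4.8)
The plan is to treat Algorithm 1 as block-coordinate ascent on each per-direction term of $f$ and to exploit the fact that every inner subproblem is a generalized Rayleigh quotient whose global maximizer is available in closed form. Fix a projection index $p$ and write the $p$-th summand of $f$ as $g(U_p,V_p)$. Because the objective is invariant to rescaling of $U_p$ and $V_p$ (numerator and denominator are each quadratic in a given block), the equality constraint $Tr(U_p^TU_pV_p^TV_p)=1$ only fixes a scale and does not alter the value of the ratio; hence I may analyse the unconstrained Rayleigh quotient and restore the constraint afterwards through the normalizations in \eqref{value_ukv} and \eqref{value_vk}.

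First I would show that the $U_p$-update does not decrease $g$. With $V_p$ held fixed, Eq. \eqref{eq_v} rewrites the $p$-th term exactly as $u_p^TM_p^vu_p / u_p^TN_p^vu_p$. The supremum of this quotient over nonzero $u_p$ equals the largest generalized eigenvalue of the pencil $(M_p^v,N_p^v)$ and is attained at the leading eigenvector $q$ of $(N_p^v)^{-1}M_p^v$; since $N_p^v \succ 0$ (it contains the regularizer $\lambda D_p^v$ with $D_p^v \succ 0$ whenever $V_p$ has full column rank), this eigenproblem is well posed. Setting $u_p=q/\sqrt{q^TD_p^vq}$ preserves the maximizing direction while enforcing $u_p^TD_p^vu_p=1$, so the updated block satisfies $g(U_p^{\mathrm{new}},V_p)\ge g(U_p^{\mathrm{old}},V_p)$. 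An identical argument applied to \eqref{eq_u} shows the subsequent $V_p$-update gives $g(U_p^{\mathrm{new}},V_p^{\mathrm{new}})\ge g(U_p^{\mathrm{new}},V_p)$. Chaining the two inequalities proves that one full pass of the Repeat loop never decreases $g$, and because the remaining $h-1$ summands of $f$ are untouched while the inner loop for index $p$ executes, the total objective $f$ is non-decreasing across iterations.

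Finally I would derive convergence from monotonicity together with boundedness. The quotient $g$ is bounded above: the numerator is a finite sum of squared traces, and with $\lambda>0$ the denominator is bounded below by $\lambda Tr(U_pV_p^TV_pU_p^T)>0$ on the constraint surface, so $g$ cannot diverge. A monotonically increasing sequence that is bounded above converges, which yields convergence of $f$ and hence a local optimum of $\{U_j\}_{j=1}^h$ and $\{V_j\}_{j=1}^h$.

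The step I expect to be the main obstacle is the rigorous justification that the leading generalized eigenvector is the \emph{global} maximizer of each constrained subproblem rather than merely a stationary point, i.e. reconciling the scale-invariance of the ratio with the quadratic constraint so that the normalizations in \eqref{value_ukv}--\eqref{value_vk} provably realize the subproblem optimum. A secondary subtlety is the interaction with the deflation in \eqref{eq_x}: one must verify that updating the training data between successive values of $p$ does not retroactively decrease previously optimized terms, which relies on the orthogonality of the projection models established in Lemma \ref{lemma1}.
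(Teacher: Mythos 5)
Your proposal is correct and follows the same overall route as the paper: alternating (block-coordinate) maximization over $U_p$ and $V_p$, with each block update shown not to decrease the $p$-th summand, and the two per-block inequalities chained to get monotonicity of $f$. Where you differ is in how the key inequality is justified, and on both points your version is actually sounder than the paper's own argument. First, the paper asserts that the subproblem in \eqref{eq_v} is \emph{convex} in $U_p$ and that the optimum is obtained by setting the derivative to zero; a ratio of quadratics is not convex, and the correct justification is exactly the one you give — it is a generalized Rayleigh quotient whose global maximum over the constraint set is attained at the leading eigenvector of $(N_p^v)^{-1}M_p^v$ (well posed since $N_p^v\succeq \lambda D_p^v\succ 0$), with the normalization in \eqref{value_ukv} only restoring scale. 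Second, the paper's Eq.~\eqref{eq_18} holds the \emph{old} $U_p^r$ fixed while updating $V_p$, so combining \eqref{eq_17} and \eqref{eq_18} as written does not literally yield \eqref{eq_19}; your chaining $g(U_p^{\mathrm{new}},V_p^{\mathrm{new}})\ge g(U_p^{\mathrm{new}},V_p^{\mathrm{old}})\ge g(U_p^{\mathrm{old}},V_p^{\mathrm{old}})$ is the correct order and is what the algorithm actually does. You also supply the boundedness-above argument needed to pass from monotone increase to convergence, which the paper omits. Two minor caveats: the ``main obstacle'' you flag is standard Courant--Fischer for the pencil $(M_p^v,N_p^v)$ and needs no extra work; and your worry about the deflation step \eqref{eq_x} is legitimate but outside the scope of the theorem as the paper proves it (the claim concerns only the inner alternating loop for a fixed $p$) — also note that Lemma~\ref{lemma1} is a vectorization identity, not an orthogonality result, so it would not by itself settle that question.
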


\begin{proof}
Suppose we have finished the first $r$-th iteration, we have got $U_j^r|_{j=1}^h$ and $V_j^r|_{j=1}^h$. Now we continue the next iteration. For a fixed ${V_j}|_{j=1}^h$ as ${V_j}^r|_{j=1}^h$ we solve the optimization problem for $U_j^{r+1}|_{j=1}^h$. For a fixed $V_j|_{j=1}^h$, the objective function in Eq. \eqref{eq_opt} is converted to the optimization problem in Eq. \eqref{eq_v}. We can easily tell that it is a convex optimization problem $w.r.t~U_j|_{j=1}^h$. Hence, the optimal solution for $U_j|_{j=1}^h$ can be obtained by setting the derivative of Eq.\eqref{eq_v} $w.r.t$ $U_j|_{j=1}^h$ to zero respectively. Thus, we have:

\begin{equation}\label{eq_17}
\scriptsize
\begin{aligned}
& \frac{\sum\limits_{i=1}^c Tr({U_p^{r+1}}^T(\overline{X_i}-\overline{X})V_p^r)^2}{\sum_{i=1}^c \sum\limits_{X_j \in \pi_i} Tr({U_p^{r+1}}^T(X_j-\overline{X_i})V_p^r)^2 + \lambda Tr(U_p^{r+1}{V_p^r}^TV_p^r{U_p^{r+1}}^T)} \\
& \geq \frac{\sum\limits_{i=1}^c Tr({U_p^{r}}^T(\overline{X_i}-\overline{X})V_p^r)^2}{\sum\limits_{i=1}^c \sum\limits_{X_j \in \pi_i} Tr({U_p^{r}}^T(X_j-\overline{X_i})V_p^r)^2 + \lambda Tr(U_p^r{V_p^r}^TV_p^r{U_p^r}^T)}
\end{aligned}
\end{equation}

Similarly, we can obtain the following inequality for fixed $U_j|_{j=1}^h$ as $U_j^r|_{j=1}^h$.

\begin{equation}\label{eq_18}
\scriptsize
\begin{aligned}
& \frac{\sum_{i=1}^c Tr({U_p^r}^T(\overline{X_i}-\overline{X})V_p^{r+1})^2}{\sum_{i=1}^c \sum\limits_{j=1}^{n_i} Tr({U_p^r}^T(x_j-\overline{X_i}){V_p^{r+1}})^2 + \lambda Tr(U_p^r{V_p^{r+1}}^T{V_p^{r+1}}{U_p^r}^T)} \\
& \geq \frac{\sum\limits_{i=1}^c Tr({U_p^{r}}^T(\overline{X_i}-\overline{X})V_p^r)^2}{\sum\limits_{i=1}^c \sum\limits_{X_j \in \pi_i} Tr({U_p^r}^T(X_j-\overline{X_i})V_p^{r})^2 + \lambda Tr(U_p^r{V_p^r}^TV_p^r{U_p^r}^T)}
\end{aligned}
\end{equation}

We have the following inequality by integrating Eq. \eqref{eq_17} and Eq. \eqref{eq_18}.

\begin{equation}\label{eq_19}
\scriptsize
\begin{aligned}
& \frac{\sum\limits_{i=1}^c Tr({U_p^{r+1}}^T(\overline{X_i}-\overline{X})V_p^{r+1})^2}{\sum\limits_{i=1}^c \sum\limits_{X_j \in \pi_i} Tr({U_p^{r+1}}^T(X_j-\overline{X_i})V_p^{r+1})^2 + \lambda Tr(U_p^{r+1}{V_r^{r+1}}^TV_r^{r+1}{U_r^{r+1}}^T)} \\
& \geq \frac{\sum\limits_{i=1}^c Tr({U_p^r}^T(\overline{X_i}-\overline{X})V_p^r)^2}{\sum_{i=1}^c \sum\limits_{X_j \in \pi_i} Tr({U_p^r}^T(X_j-\overline{X_i})V_p^{r})^2 + \lambda Tr(U_p^r{V_p^r}^TV_p^r{U_p^r}^T)}
\end{aligned}
\end{equation}

From Eq. \eqref{eq_19} we can see that the objective function value increases monotonically. Theorem \ref{theorem7} has been proved.

\end{proof}

\section{Experiment}
In this section, we test the proposed CRP algorithm. We compare CRP with seven algorithms, including LDA \cite{bib_Belhumeur}, 2DPCA \cite{bib_Jian}, 2DLDA \cite{bib_Ye}, BilinearSVM \cite{bib_Pirsiavash}, two non-iterative 2DLDA algorithms (S2DLDA and P2DLDA) \cite{bib_Inoue} and Tensor LPP \cite{bib_He_Ten}.

There are five parts in our experiments. We first validate how fast our algorithm converges over the five different datasets. Secondly, we evaluate how the classification performance varies w.r.t different $k$s. Thirdly, we conduct several different initializations and report the performance variance with different initialization manners. Then, we compare the results of classification in a variety of multimedia analysis, including face recognition, object recognition, facial expression recognition, head pose recognition and handwritten digit recognition. Accuracy is used as the evaluation metric for classification. Finally, comparisons have been also made under a two-class setting, in which gender recognition is performed over three different face datasets.

Following \cite{bib_Ye,bib_Li,bib_Ma}, we use the gray pixel values of the images as the features. In all of our experiments, we randomly sample 3, 5, 10, and 20 data per class as the training data for all the experiments. The remaining samples are used as testing data. To evaluate the performance with sufficient number of training data, we further use 80\% data as training data and the remaining as testing data. The regularization parameter, $\lambda$, in the proposed algorithm is tuned in a range of $\{10^{-6}, 10^{-4}, \dots , 10^4, 10^6\}$ and the best result is reported. We independently repeat the experiments five times and report the results of average accuracy with the stand deviations. Following the work in \cite{bib_Belhumeur,HouNZYW14}, we project the original data into a $(c-1)^2$ dimensional subspace for all the compared algorithms. LIBSVM is applied as the implementation of SVM. We learn the optimal regularization parameter of SVM through a tenfold cross-validation. 

\begin{table}
\label{setting}
\renewcommand{\arraystretch}{1.3}
\caption{Dataset details}
\centering
\begin{tabular}{|c||c|c|c|}
\hline
\bfseries Dataset & \bfseries Matrix Size & \bfseries Dataset Size & \bfseries Class \#\\
\hline \hline

UUIm & $24 \times 32$ & 2,220 & 10\\
\hline
CVL & $32 \times 32$ & 21,780 & 10\\
\hline
Pointing'04(Tilt) & $40 \times 30$ & 2,790 & 9\\
\hline
Pointing'04(Pan) & $40 \times 30$ & 2,790 & 13\\
\hline 
USPS & $16 \times 16$ & 9,298 & 10 \\
\hline
COIL-20 & $32 \times 32$ & 1,440 & 20\\
\hline
\end{tabular}
\end{table}

\subsection{Datasets Description}

UUIm: The UUIm Head Pose and Gaze database \cite{bib_WeidenBacher} is used to evaluate the performance of head pose and gaze. This database comprises 2,220 images from ten different people. In our experiment, we resize each image to $24 \times 32$. In this database, all horizontal head poses are with a vertical orientation of 0 degree.

CVL: The CVL dataset \cite{bib_Diem} is used to evaluate the performance of handwritten digit recognition. There are 21,780 handwritten digit images in this dataset. In our experiment, we resize each image to $32 \times 32$.

Pointing'04: The Pointing'04 dataset \cite{bib_Gourier} is used for head pose estimation. Pointing'04 comprises 2,790 images from 15 people. In our experiment, we resize each image to $40 \times 30$. Both the tilt and pan angles are used to determine the head pose. For tilt, there are nine poses. Whereas for pan, there are 13 poses.

USPS: We use the USPS database to test the performance of our algorithm on handwritten digit recognition. There are 9,298 handwritten digit images in this database. We resize all the images to $16 \times 16$.

Coil20: Coil20 comprises 1,440 images of 20 objects. In our experiment, we resize each image to $32 \times 32$.

The detailed information of the dataset is summarized in Tab. 1, including the number of samples, the feature dimensions and the total number of classes.

\begin{figure*}[!ht]
\centering
\subfigure[]{
\includegraphics[width=0.15\linewidth]{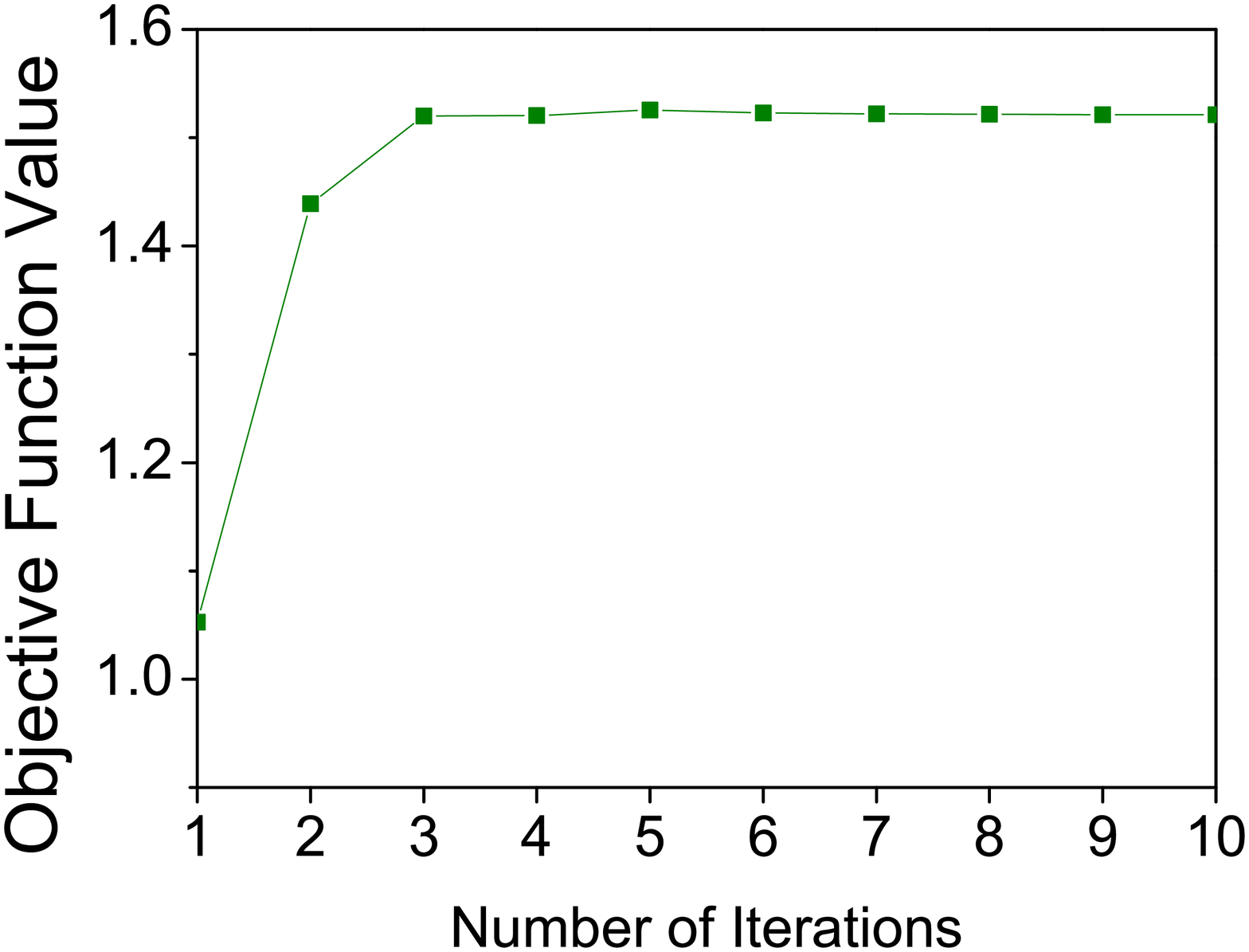}}
\subfigure[]{
\includegraphics[width=0.15\linewidth]{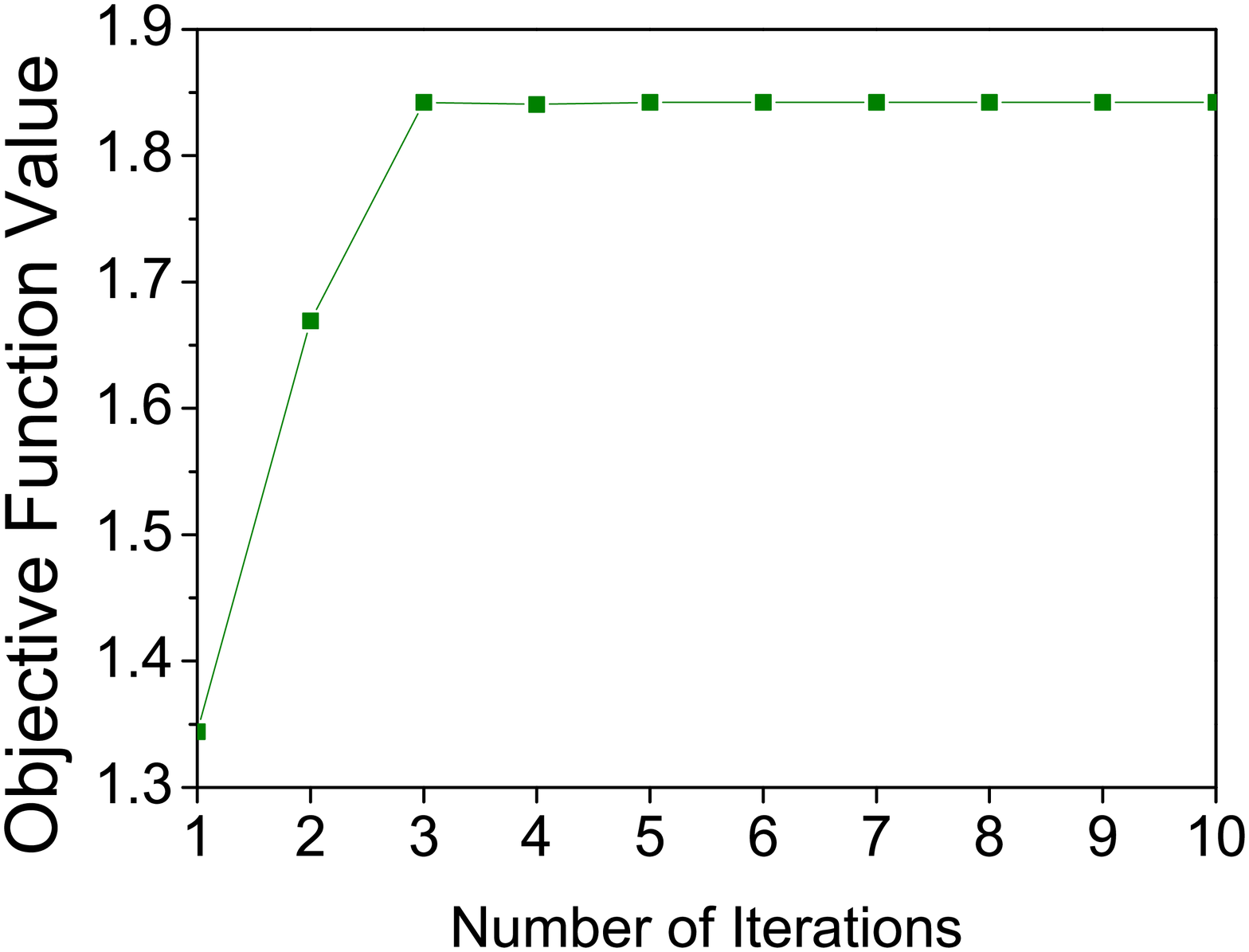}}
\subfigure[]{
\includegraphics[width=0.15\linewidth]{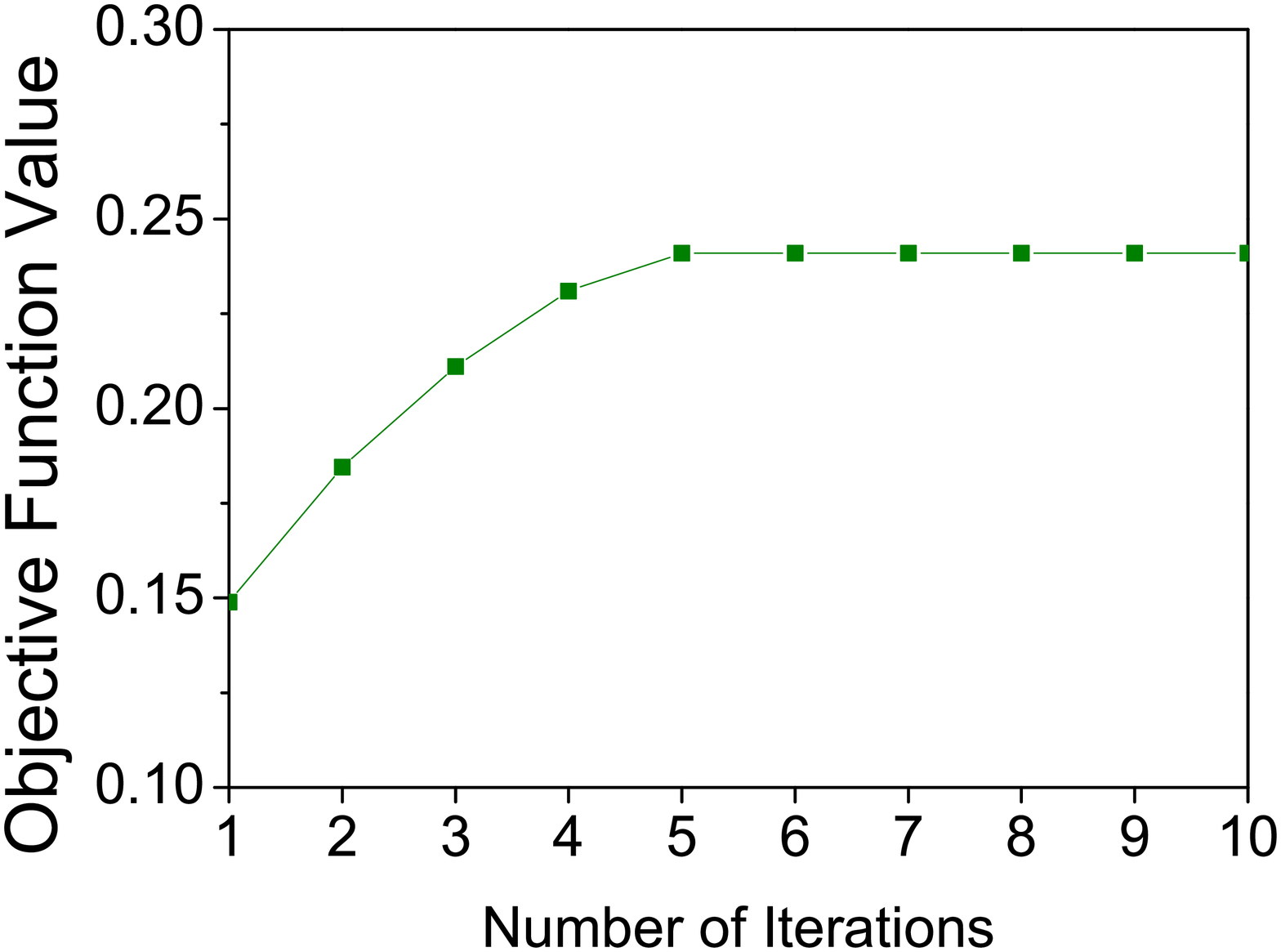}}
\subfigure[]{
\includegraphics[width=0.15\linewidth]{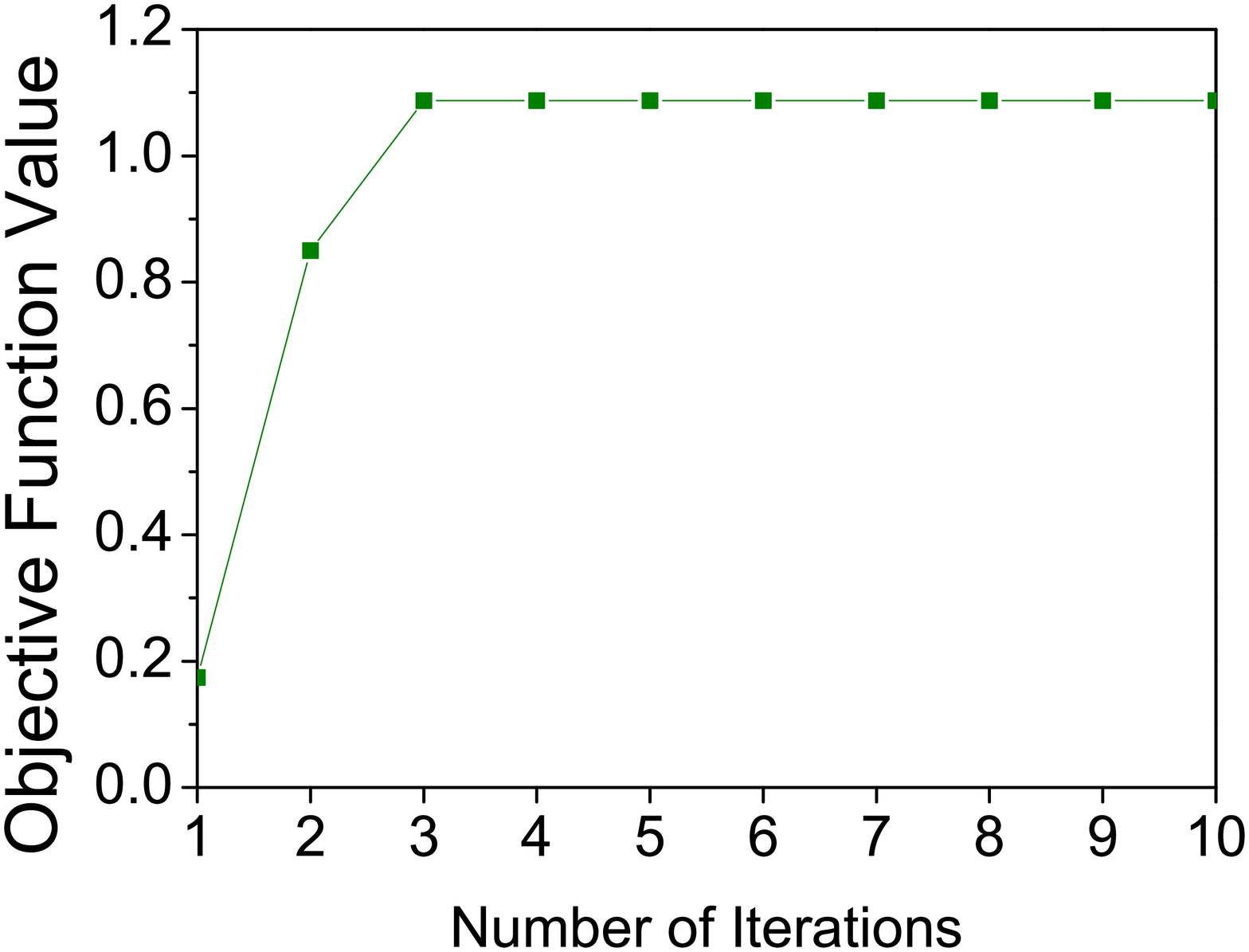}}
\subfigure[]{
\includegraphics[width=0.15\linewidth]{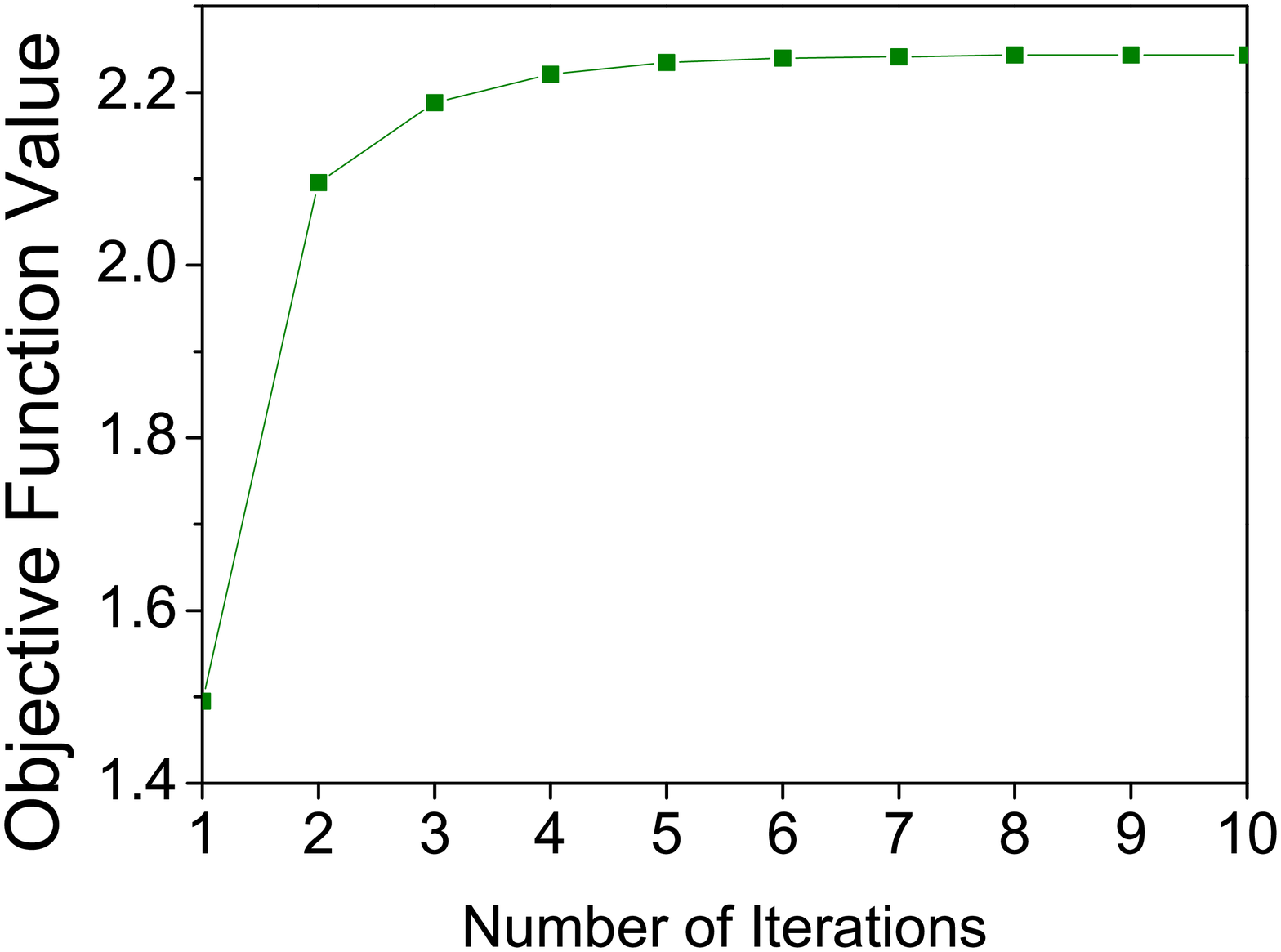}}
\subfigure[]{
\includegraphics[width=0.15\linewidth]{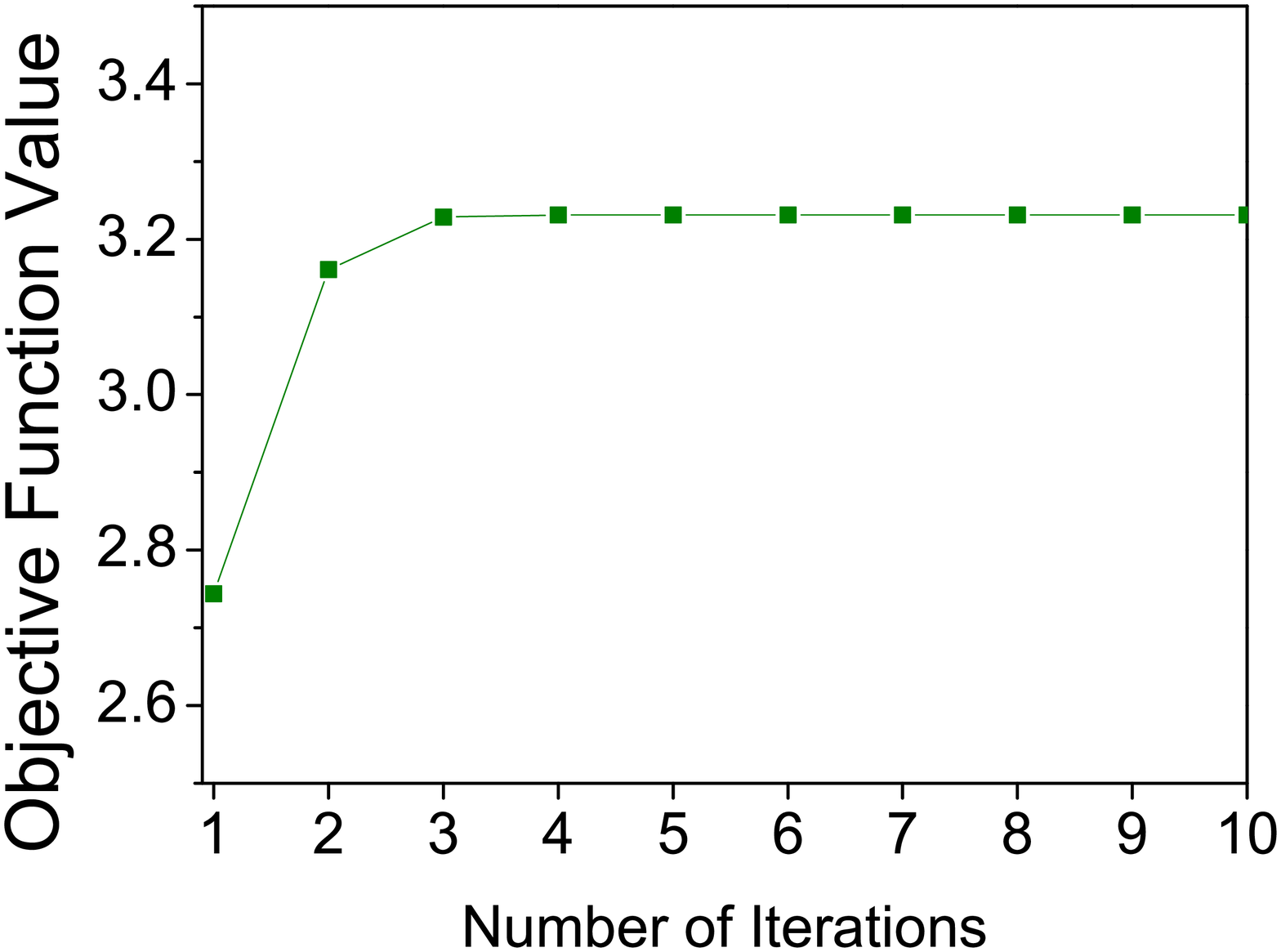}}
\caption{Convergence demonstration on different datasets. (a) UUIm, (b) CVL, (c) Pointing'04(tilt), (d) Pointing'04(pan), (e) USPS, (f) Coil20.} \label{Fig1}
\end{figure*}

\subsection{Experimental Results}

\subsubsection{Convergence Demonstration}
We first conduct experiments to validate the convergence of our algorithm. Note that our algorithm learns multiple (i.e., $k$) projection models one by one via the same approach. We therefore randomly select one projection model and plot its objective function values. The convergence demonstration on five different datasets is shown in Fig. \ref{Fig1}, where the vertical axis stands for the objective function value and the horizontal axis denotes the number of iterations. From Fig. \ref{Fig1}, we observe that our proposed algorithm converges fast on all datasets. In most of the cases, the algorithm converges within ten iterations, demonstrating that the proposed optimization algorithm efficiently converges. For other projection models, we observe similar results.

\subsubsection{Performance Variance w.r.t $k$}
In this section, experiments are conducted to study how $k$ affects the performance of the proposed algorithm. UUIm dataset is utilized in this experiment. Three data per class are utilized as training data.

\begin{figure*}[!ht]
\centering
\subfigure[]{
\includegraphics[width=0.15\linewidth]{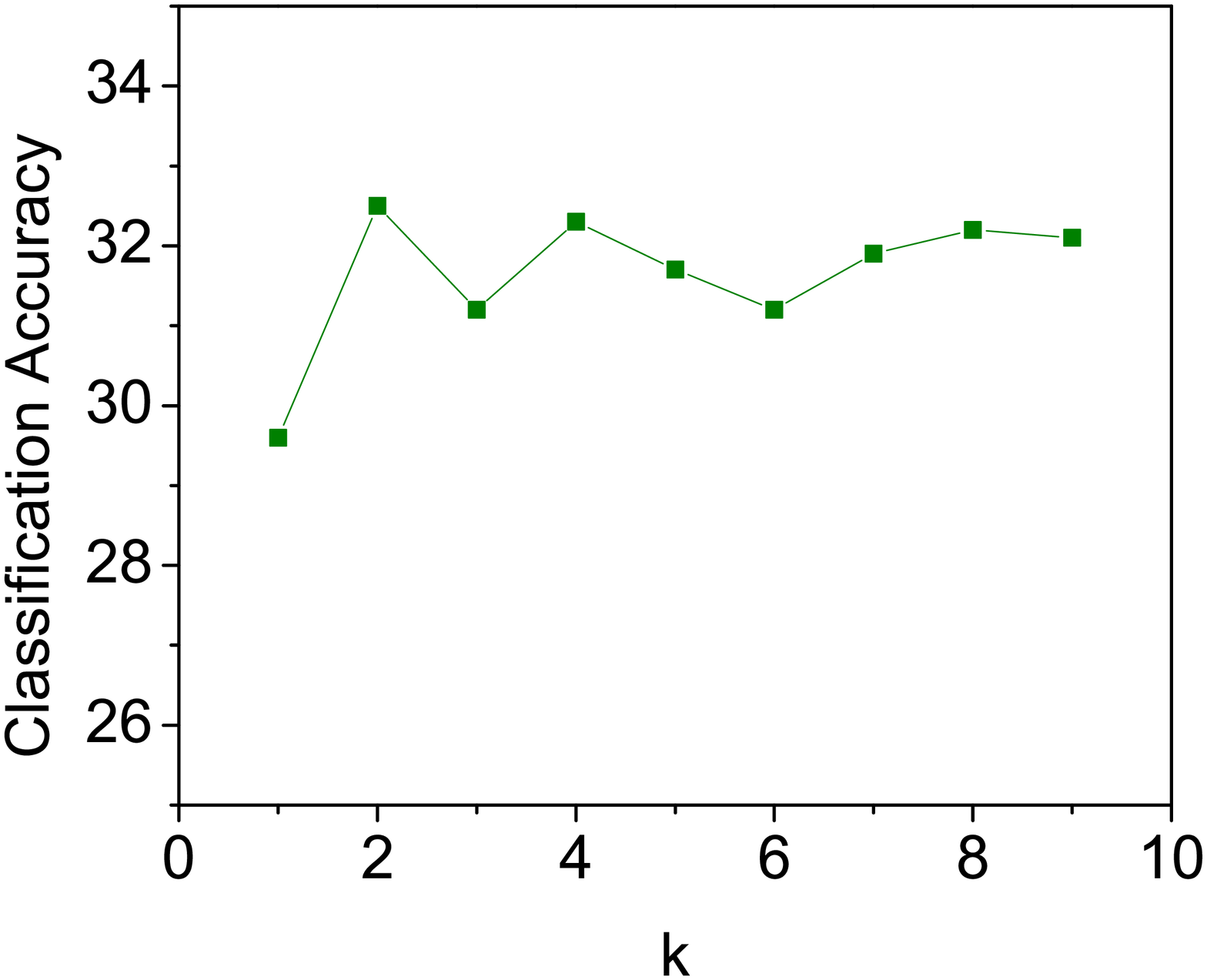}}
\subfigure[]{
\includegraphics[width=0.15\linewidth]{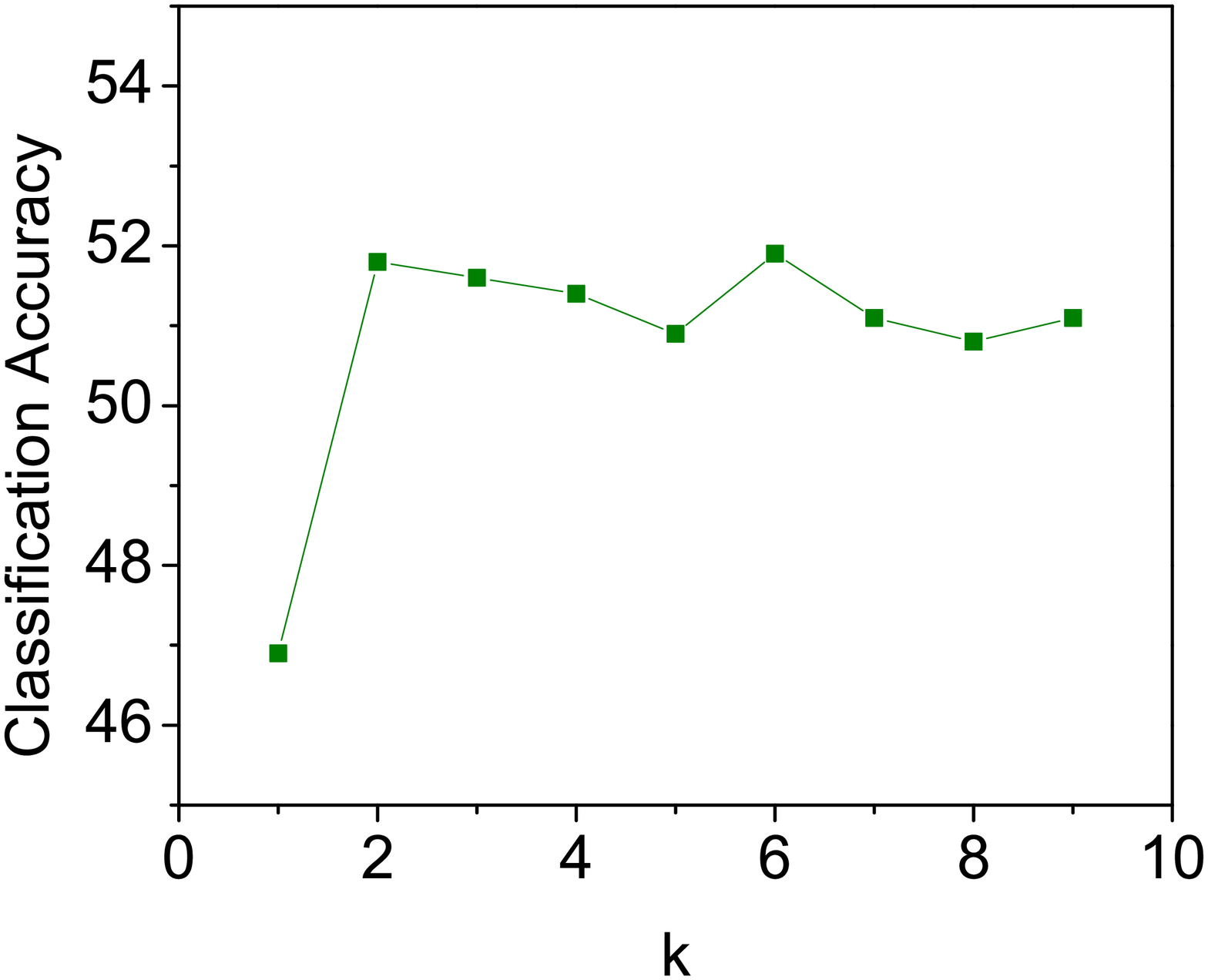}}
\subfigure[]{
\includegraphics[width=0.15\linewidth]{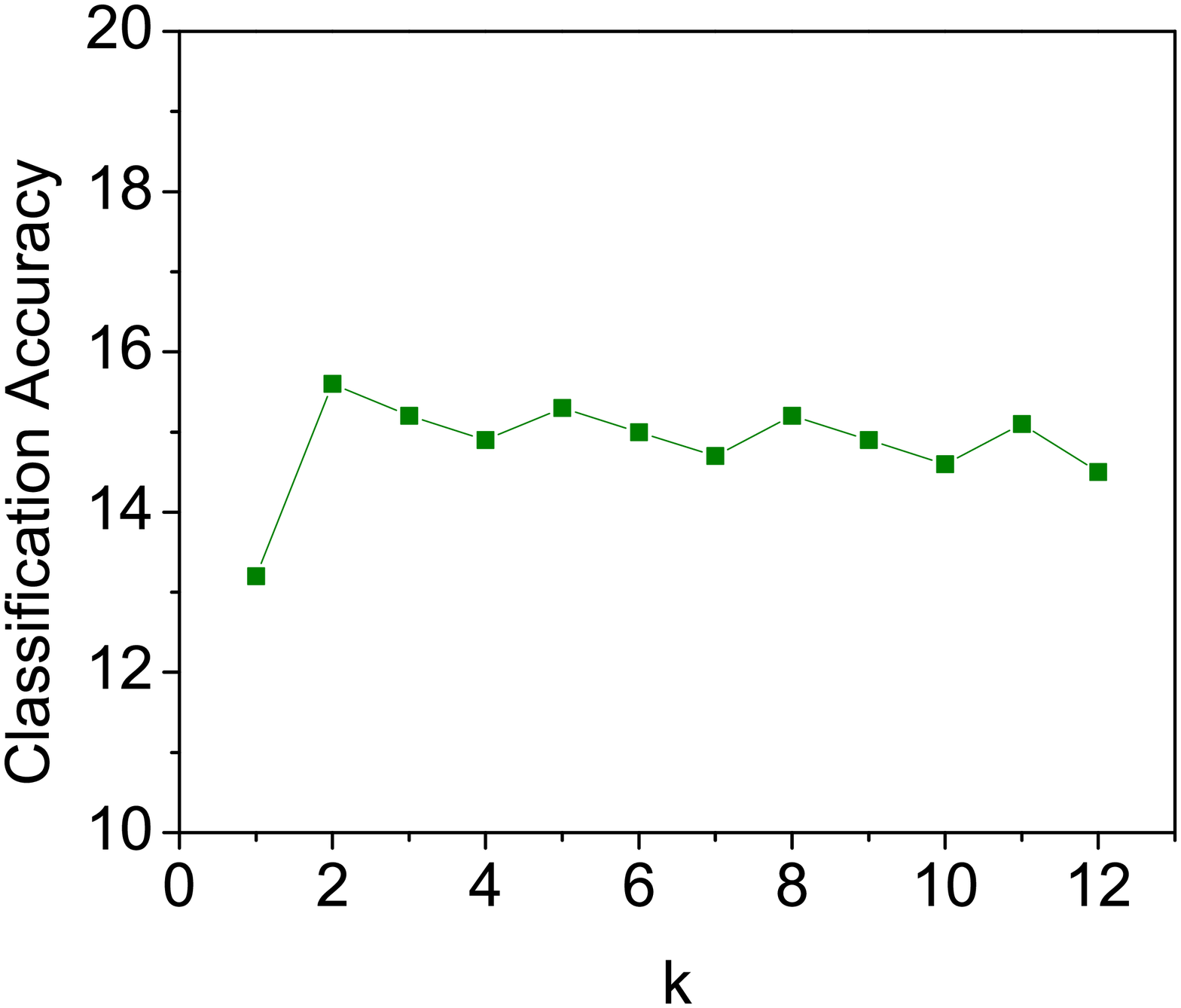}}
\subfigure[]{
\includegraphics[width=0.15\linewidth]{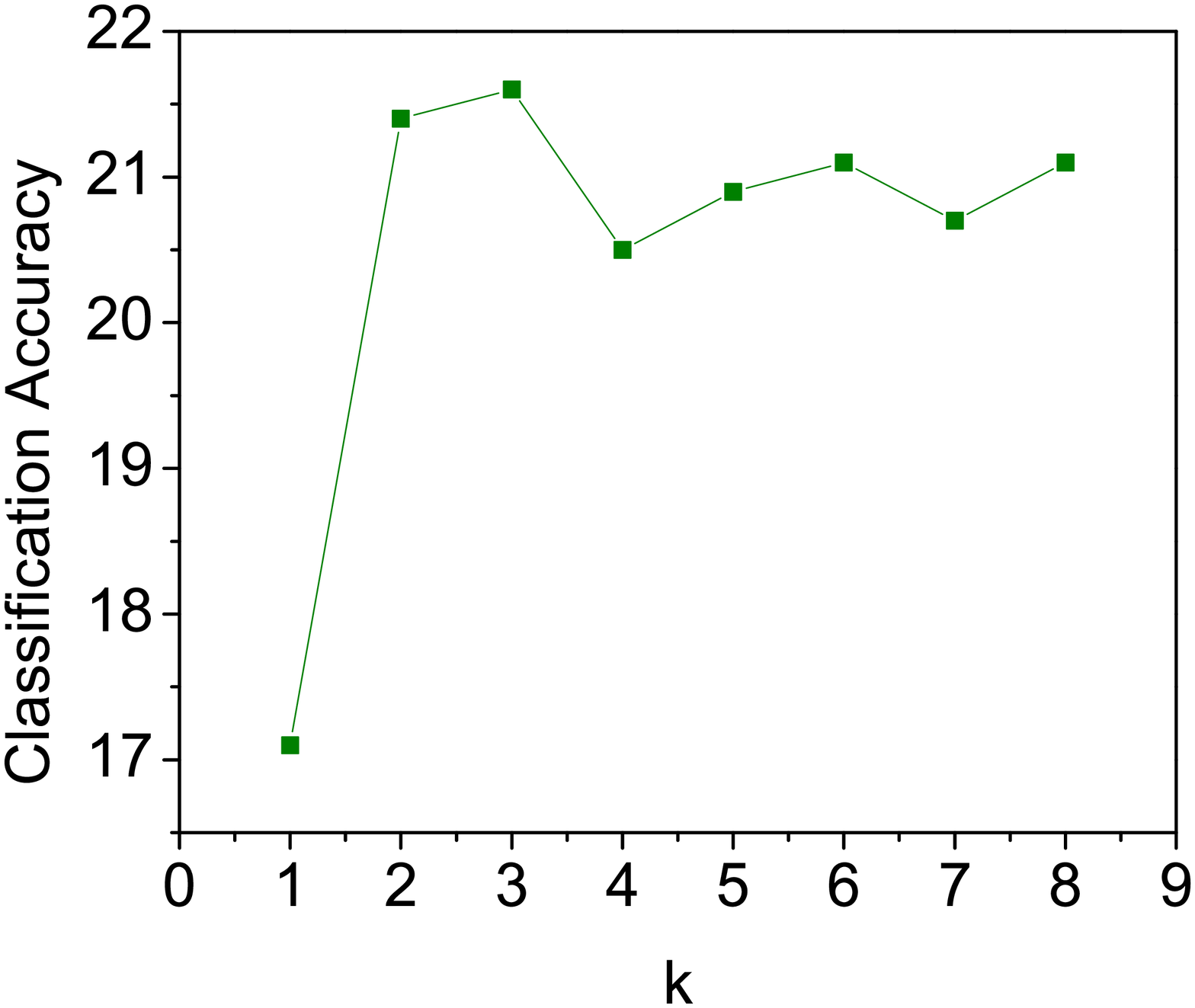}}
\subfigure[]{
\includegraphics[width=0.15\linewidth]{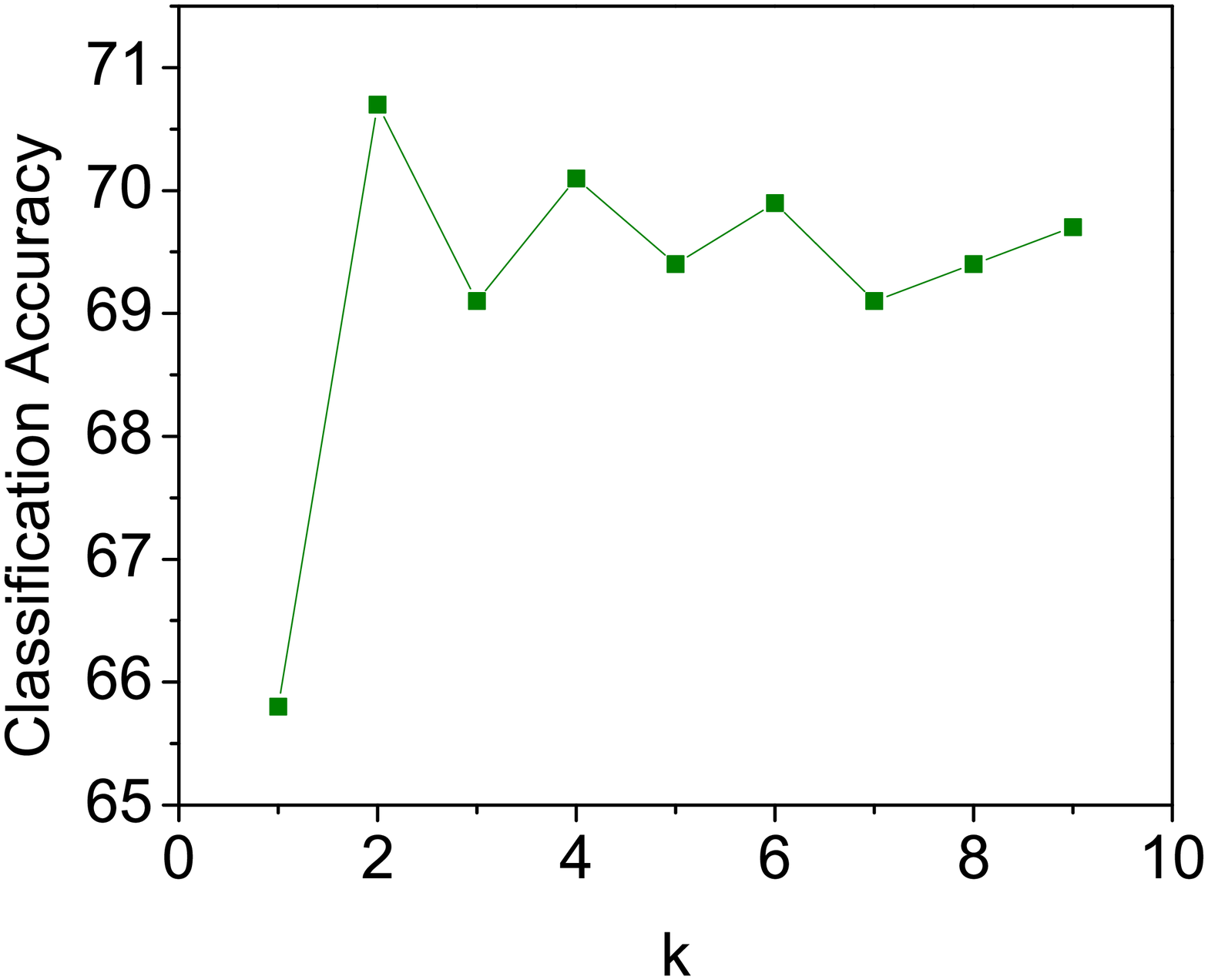}}
\subfigure[]{
\includegraphics[width=0.15\linewidth]{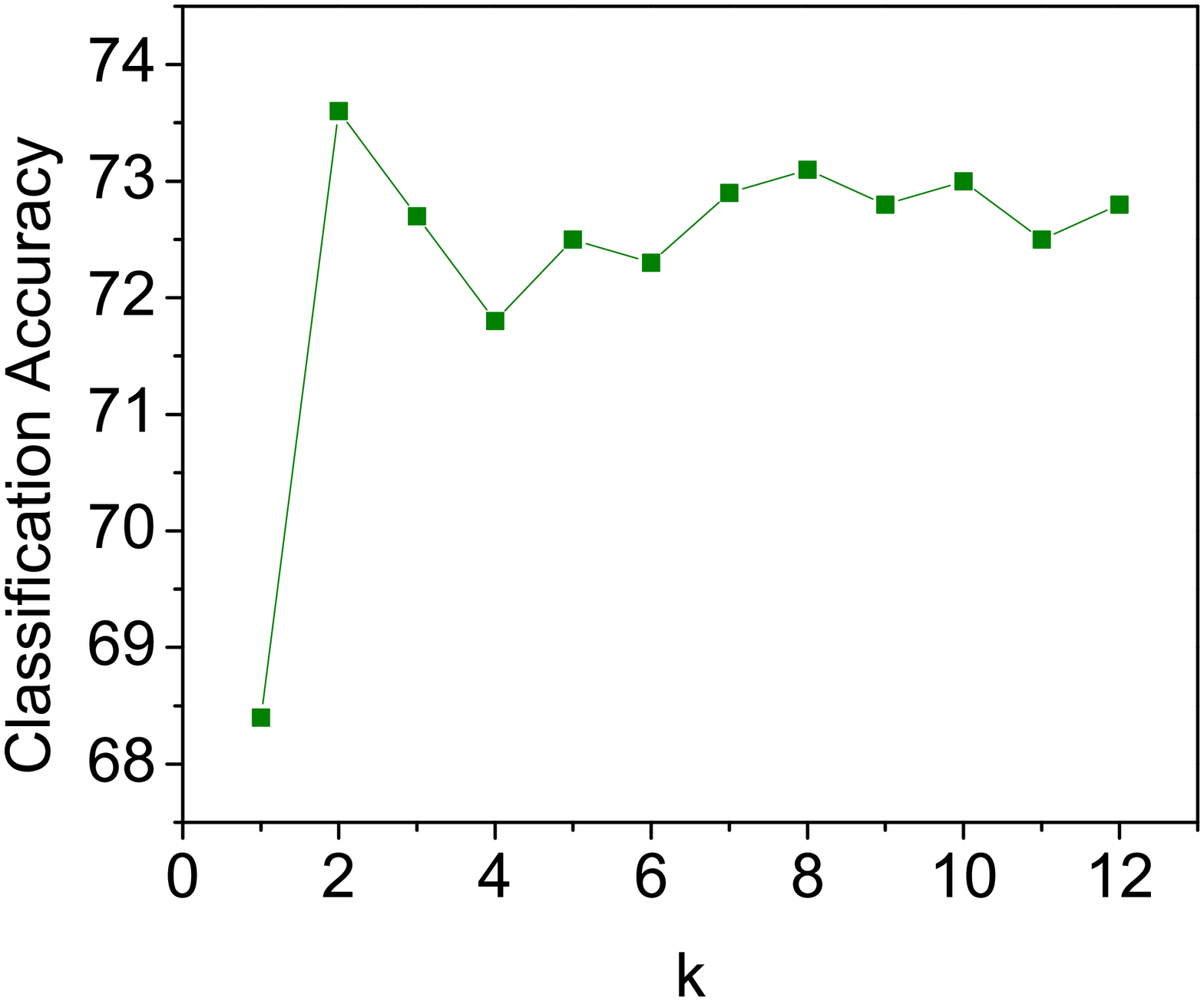}}
\caption{Performance Variance w.r.t $k$ on different datasets. (a) UUIm, (b) CVL, (c) Pointing'04(tilt), (d) Pointing'04(pan), (e) USPS, (f) Coil20.} \label{Fig2}
\end{figure*}

Fig. \ref{Fig2} shows classification accuracy varies when changing different $k$s. Taking UUIm as an example, we have the following observations: 1) When $k$ is set to 1, the classification accuracy is relatively low, at only 29.6\%. 2) When we increase $k$ to 2, the classification accuracy rises to about 32.5\%. 3) When $k$ is further increased, the classification accuracy is stable.

Based on the above observations, we empirically set $k$ to 2 in the remaining experiments, which can decrease computation complexity and obtain decent results as well.

\subsubsection{Performance with Different Initializations}

In this section, experiments are conducted to evaluate how performance varies with different initializations. We select three samples per class for training and the rest as testing data. We conduct different initializations, including setting all the diagonal elements of $V$ to 0.5, 1, 2 respectively and random values. The experimental results are shown in Tab. \ref{differentIni}.

From the experiment results, we can observe that the proposed algorithm can always get good local optima with different initializations.

\begin{table}[!ht]
\small
\label{initialization}
\renewcommand{\arraystretch}{1.3}
\caption{Performance w.r.t Different Initializations}
\scriptsize
\centering
\begin{tabular}{|c||c|c|c|c|}
\hline
\bfseries Dataset & \bfseries 0.5 & \bfseries 1 & \bfseries 2 & \bfseries random  \\
\hline \hline
UUIm & $32.3 \pm 1.4$ & $33.1 \pm 1.1$ & $32.8 \pm 1.2$ & $32.4 \pm 1.4$ \\
\hline
CVL  & $52.1 \pm 1.8$ & $52.8 \pm 1.5$ & $51.9 \pm 1.6$ & $51.9 \pm 1.5$ \\
\hline
Pointing'04(Tilt) & $20.9 \pm 1.5$ & $21.7 \pm 1.7$ & $21.0 \pm 1.4$ & $21.4 \pm 1.2$ \\
\hline
Pointing'04(Pan) &  $15.1 \pm 0.8$ & $15.8 \pm 1.1$ & $16.1 \pm 1.3$ & $15.6 \pm 1.4$ \\
\hline
USPS   & $70.1 \pm 1.1$ & $70.9 \pm 1.5$ & $70.5 \pm 1.4$ & $70.7 \pm 1.3$ \\
\hline
Coil20  & $73.1 \pm 1.4$ & $73.6 \pm 1.7$ & $73.8 \pm 1.2$ & $73.6 \pm 1.4$ \\
\hline
\end{tabular}
\label{differentIni}
\end{table}

\subsubsection{Classification Performance Comparison}
In this experiment, we compare classification performance of our algorithm with other methods, including 2DPCA \cite{bib_Jian}, 2DLDA \cite{bib_Ye}, S2DLDA \cite{bib_Inoue}, P2DLDA \cite{bib_Inoue}, T-LPP \cite{bib_He_Ten}, Bilinear SVM \cite{bib_Pirsiavash}. Accuracy is used as an evaluation metric. Note that once we obtain a new lower dimensional representation using CRP, any classification algorithm can be used for data classification. In order to show the discriminant capability of our algorithm, we use two classifiers in this experiment. The first one is SVM, which has been widely used. The second one is $1$-Nearest-Neighbor (1NN), which is used in \cite{bib_Ye,bib_Li} to evaluate the effectiveness of the classical 2DLDA and other related algorithms. Since bilinear SVM is a classifier, we directly compare the classification results obtained from bilinear SVM with other methods. The results of classification performance using SVM are reported from Tab. \ref{table_result_3accSVM} to Tab. \ref{table_result_80percaccSVM} and the results of classification performance using 1NN are presented from Tab. \ref{table_result_3accNN} to Tab. \ref{table_result_80percaccNN}. When we use SVM as the classifier, we have the following observations.

CRP outperforms the other seven algorithms, which demonstrates that the utilization of multiple projection models is useful to improve the classification performance. We also observe that when there are insufficient training data, CRP has more advantages over the other compared algorithms. For example, when we use three training data per class for CVL, the classification accuracy of CRP outperforms 2DLDA by $107\%$, relatively. It further indicates that CRP is more capable of capturing discriminant information when the training samples are quite limited. S2DLDA gets slightly better performance results than 2DLDA, on the whole, demonstrating that selecting a projection model with a higher discriminant capability can improve the classification performance. Nevertheless, CRP still outperforms S2DLDA. 

With the increase of training samples, the classification results of all the compared algorithms are improved. CRP consistently performs better than the other compared algorithms. Take UUIm for an example. When we increase the number of training data from $3 \times c$ to $20 \times c$, the classification accuracy of Bilinear SVM, the second best algorithm, and CRP are improved from 28.2 to 61.2 and from 32.5 to 64.2 respectively. This observation indicates that the proposed algorithm adopts multiple projection models to enable more discriminant lower dimensional embeddings.

We can get similar trends when using $1$-NN as the classifier, again demonstrating the effectiveness of CRP.

To step further, we also evaluate the performance when the training samples are sufficient. 80\% samples are utilized for training. The experimental results are reported in Tab. \ref{table_result_80percaccSVM} and Tab. \ref{table_result_80percaccNN}. From the result, we can observe that when training samples are sufficient, the proposed algorithm still has the best performance with a relative improvement.

\begin{table*}[!ht]
\renewcommand{\arraystretch}{1.3}
\caption{\small Performance Comparison (Classification Accuracy $\pm$ std\%) of LDA, 2DPCA, 2DLDA, S2DLDA, P2DLDA, T-LPP, Bilinear SVM and CRP with three training data per class for each dataset. SVM is used as a classifier.}
\newcommand{\tabincell}[2]{\begin{tabular}{@{}#1@{}}#2\end{tabular}}
\centering
\begin{tabular}{|c||c|c|c|c|c|c|c|c|}
\hline
\bfseries Dataset & \bfseries LDA & \bfseries 2DPCA \cite{bib_Jian} & \bfseries 2DLDA \cite{bib_Ye}& \bfseries S2DLDA \cite{bib_Inoue}& \bfseries P2DLDA \cite{bib_Inoue}& \bfseries T-LPP \cite{bib_He_Ten}& \bfseries B-SVM \cite{bib_Pirsiavash}& \bfseries CRP \\
\hline \hline
UUIm &  $16.7 \pm 2.1$   & $20.1 \pm 1.8$  & $19.3 \pm 1.5$ & $26.4 \pm 1.2$ & $22.3 \pm 1.6$ & $26.5 \pm 1.4$ & $28.2 \pm 1.7$ & $\mathbf{32.5 \pm 1.3}$\\
\hline
CVL &    $21.8 \pm 1.2$  & $22.0 \pm 1.8$  & $24.3 \pm 1.5$ & $29.8 \pm 1.9$ & $22.8 \pm 1.4$ & $41.3 \pm 1.8$ & $47.1 \pm 1.6$ & $\mathbf{51.8 \pm 1.5}$\\
\hline
Pointing'04(Tilt) &  $6.9 \pm 0.8$  & $8.5 \pm 1.1$ & $10.3 \pm 1.4$ & $11.8 \pm 1.2$ & $9.3 \pm 1.6$ & $15.1 \pm 1.2$ & $18.1 \pm 1.7$ & $\mathbf{21.4 \pm 1.2}$ \\
\hline
Pointing'04(Pan) &  $6.4 \pm 1.3$   & $8.2 \pm 1.5$ & $9.9 \pm 1.6$ & $10.2 \pm 1.4$ & $8.1 \pm 1.0$ & $10.5 \pm 1.7$ & $13.1 \pm 1.5$ & $\mathbf{15.6 \pm 1.4}$ \\
\hline
USPS &    $42.1 \pm 1.7$   & $43.1 \pm 1.5$ & $45.8 \pm 1.8$ & $55.3 \pm 1.5$ & $41.9 \pm 1.3$ & $57.2 \pm 1.6$ & $63.4 \pm 1.8$ & $\mathbf{70.7 \pm 1.3}$ \\
\hline
Coil20 &  $56.9 \pm 1.8$   & $58.3 \pm 1.5$ & $60.4 \pm 1.3$ & $67.6 \pm 1.4$  & $62.2 \pm 1.7$  & $66.3 \pm 1.6$ & $68.6 \pm 1.7$ & $\mathbf{73.6 \pm 1.4 }$\\
\hline
\end{tabular}
\label{table_result_3accSVM}
\end{table*}

\begin{table*}[!ht]
\renewcommand{\arraystretch}{1.3}
\caption{\small Performance Comparison (Classification Accuracy $\pm$ std\%) of LDA, 2DPCA, 2DLDA, S2DLDA, P2DLDA, T-LPP, Bilinear SVM and CRP with five training data per class for each dataset. SVM is used as a classifier.}
\newcommand{\tabincell}[2]{\begin{tabular}{@{}#1@{}}#2\end{tabular}}
\centering
\begin{tabular}{|c||c|c|c|c|c|c|c|c|}
\hline
\bfseries Dataset & \bfseries LDA & \bfseries 2DPCA \cite{bib_Jian} & \bfseries 2DLDA \cite{bib_Ye}& \bfseries S2DLDA \cite{bib_Inoue}& \bfseries P2DLDA \cite{bib_Inoue}& \bfseries T-LPP \cite{bib_He_Ten}& \bfseries B-SVM \cite{bib_Pirsiavash}& \bfseries CRP \\
\hline \hline
UUIm &  $28.2 \pm 1.4$    & $27.7 \pm 1.2$ & $30.2 \pm 1.6$ & $28.4 \pm 1.9$  & $30.1 \pm 1.5$  & $31.8 \pm 1.6$ & $31.2 \pm 1.8$ & $\mathbf{33.5 \pm 1.2}$\\
\hline
CVL &   $37.6 \pm 1.4$   & $36.4 \pm 1.7$ & $40.8 \pm 1.9$ & $40.9 \pm 1.5$  & $31.7 \pm 1.8$  & $47.3 \pm 1.6$ & $49.3 \pm 1.3$ & $\mathbf{58.2 \pm 1.5}$\\
\hline
Pointing04(tilt) &  $13.2 \pm 1.8$    & $13.9 \pm 1.4$ & $14.6 \pm 1.7$ & $15.4 \pm 1.6$  & $14.7 \pm 1.4$  & $16.4 \pm 1.8$ & $19.8 \pm 1.9$ & $\mathbf{24.1 \pm 1.3}$\\
\hline
Pointing04(pan) & $8.8 \pm 1.8$  & $10.8 \pm 1.5$ & $10.3 \pm 1.3$ & $9.2 \pm 1.4$  & $8.8 \pm 1.2$  & $10.9 \pm 1.7$ & $12.4 \pm 1.8$ & $\mathbf{15.8 \pm 1.2}$\\
\hline
USPS &     $64.2 \pm 1.7$    & $72.9 \pm 1.4$ & $66.8 \pm 1.6$ & $67.8 \pm 1.2$  & $56.8 \pm 1.5$  & $65.3 \pm 1.2$ & $72.9 \pm 1.4$ & $\mathbf{75.8 \pm 1.3}$\\
\hline
COIL-20 &   $66.2 \pm 1.2$  & $68.1 \pm 1.9$ & $71.6 \pm 1.8$ & $80.2 \pm 1.5$  & $67.3 \pm 1.4$  & $76.3 \pm 1.1$ & $78.5 \pm 1.6$ & $\mathbf{82.4 \pm 1.4}$\\
\hline
\end{tabular}
\label{table_result_5accSVM}
\end{table*}

\begin{table*}[!ht]
\renewcommand{\arraystretch}{1.3}
\caption{\small Performance Comparison (Classification Accuracy $\pm$ std\%) of LDA, 2DPCA, 2DLDA, S2DLDA, P2DLDA, T-LPP, Bilinear SVM and CRP with ten training data per class for each dataset. SVM is used as a classifier.}
\newcommand{\tabincell}[2]{\begin{tabular}{@{}#1@{}}#2\end{tabular}}
\centering
\begin{tabular}{|c||c|c|c|c|c|c|c|c|}
\hline
\bfseries Dataset & \bfseries LDA & \bfseries 2DPCA \cite{bib_Jian} & \bfseries 2DLDA \cite{bib_Ye}& \bfseries S2DLDA \cite{bib_Inoue}& \bfseries P2DLDA \cite{bib_Inoue}& \bfseries T-LPP \cite{bib_He_Ten}& \bfseries B-SVM \cite{bib_Pirsiavash}& \bfseries CRP \\
\hline \hline
UUIm &   $36.1 \pm 1.6$  & $37.2 \pm 1.4$ & $39.5 \pm 1.7$ & $34.1 \pm 1.8$  & $41.2 \pm 1.2$  & $43.9 \pm 1.6$ & $48.2 \pm 1.4$ & $\mathbf{52.3 \pm 1.3}$\\
\hline
CVL &   $56.8 \pm 1.5$   & $57.1 \pm 1.6$ & $57.9 \pm 1.9$ & $51.9 \pm 1.4$  & $48.4 \pm 1.3$  & $66.9 \pm 1.6$ & $64.2 \pm 1.9$ & $\mathbf{68.3 \pm 1.5}$\\
\hline
Pointing04(tilt) & $23.9 \pm 1.7$   & $25.6 \pm 1.9$ & $25.8 \pm 1.4$ & $21.8 \pm 1.2$  & $21.8 \pm 1.2$  & $26.1 \pm 1.4$ & $28.8 \pm 1.3$ & $\mathbf{33.1 \pm 1.4}$\\
\hline
Pointing04(pan) &   $10.1 \pm 1.3$ & $10.5 \pm 1.6$ & $11.4 \pm 1.8$ & $10.5 \pm 1.6$  & $9.8 \pm 1.7$  & $13.6 \pm 1.5$ & $15.9 \pm 1.8$ & $\mathbf{19.4 \pm 1.6}$\\
\hline
USPS &  $73.4 \pm 1.7$   &$74.4 \pm 1.9$ & $77.3 \pm 1.3$ & $79.2 \pm 1.8$  & $73.9 \pm 1.4$  & $78.2 \pm 1.5$ & $79.4 \pm 1.9$ & $\mathbf{84.3 \pm 1.3}$\\
\hline
COIL-20 & $90.2 \pm 1.6$  & $93.6 \pm 1.4$ & $91.9 \pm 1.3$ & $91.7 \pm 1.8$  & $79.4 \pm 1.4$  & $90.8 \pm 1.8$ & $92.8 \pm 1.5$ & $\mathbf{98.8 \pm 1.1}$\\
\hline
\end{tabular}
\label{table_result_10accSVM}
\end{table*}

\begin{table*}[!ht]
\renewcommand{\arraystretch}{1.3}
\caption{\small Performance Comparison (Classification Accuracy $\pm$ std\%) of LDA, 2DPCA, 2DLDA, S2DLDA, P2DLDA, T-LPP, Bilinear SVM and CRP with 20 training data per class for each dataset. SVM is used as a classifier.}
\newcommand{\tabincell}[2]{\begin{tabular}{@{}#1@{}}#2\end{tabular}}
\centering
\begin{tabular}{|c||c|c|c|c|c|c|c|c|}
\hline
\bfseries Dataset & \bfseries LDA & \bfseries 2DPCA \cite{bib_Jian} & \bfseries 2DLDA \cite{bib_Ye}& \bfseries S2DLDA \cite{bib_Inoue}& \bfseries P2DLDA \cite{bib_Inoue}& \bfseries T-LPP \cite{bib_He_Ten}& \bfseries B-SVM \cite{bib_Pirsiavash}& \bfseries CRP \\
\hline \hline
UUIm &  $46.4 \pm 1.7$  & $46.1 \pm 1.4$ & $47.3 \pm 1.8$ & $45.8 \pm 1.4$  & $59.5 \pm 1.7$  & $56.3 \pm 1.3$ & $61.2 \pm 1.8$ & $\mathbf{64.2 \pm 1.3}$\\
\hline
CVL & $67.9 \pm 1.7$  & $68.3 \pm 1.4$ & $69.2 \pm 1.6$ & $68.6 \pm 1.7$  & $58.3 \pm 1.7$  & $69.1 \pm 1.4$ & $70.9 \pm 1.6$ & $\mathbf{79.2 \pm 1.5}$\\
\hline
Pointing04(tilt) &   $28.6 \pm 1.6$  & $29.1 \pm 1.5$ & $31.8 \pm 1.8$ & $27.9 \pm 1.3$  & $25.7 \pm 1.8$  & $35.1 \pm 1.3$ & $35.6 \pm 1.8 $ & $\mathbf{38.3 \pm 1.6}$\\
\hline
Pointing04(pan) &   $12.0 \pm 1.6$   & $11.8 \pm 1.5$ & $12.5 \pm 1.7$ & $13.8 \pm 1.2$  & $11.2 \pm 1.3$  & $14.4 \pm 1.4$ & $16.9 \pm 1.7$ & $\mathbf{20.3 \pm 1.4}$\\
\hline
USPS &  $83.8 \pm 1.8$   & $84.2 \pm 1.9$ & $85.8 \pm 1.5$ & $86.8 \pm 1.9$  & $80.8 \pm 1.3$  & $81.8 \pm 1.6$ & $86.6 \pm 1.8$ & $\mathbf{88.4 \pm 1.4}$\\
\hline
COIL-20 &    $93.5 \pm 1.8$    & $94.8 \pm 1.7$ & $95.9 \pm 1.4$ & $97.5 \pm 1.8$  & $89.5 \pm 1.3$  & $96.6 \pm 1.8$ & $96.7 \pm 1.5$ & $\mathbf{99.2 \pm 1.3}$\\
\hline
\end{tabular}
\label{table_result_20accSVM}
\end{table*}

\begin{table*}[!ht]
\renewcommand{\arraystretch}{1.3}
\caption{\small Performance Comparison (Classification Accuracy $\pm$ std\%) of LDA, 2DPCA, 2DLDA, S2DLDA, P2DLDA, T-LPP, Bilinear SVM and CRP with 80\% training data for each dataset. SVM is used as a classifier.}
\newcommand{\tabincell}[2]{\begin{tabular}{@{}#1@{}}#2\end{tabular}}
\centering
\begin{tabular}{|c||c|c|c|c|c|c|c|c|}
\hline
\bfseries Dataset & \bfseries LDA & \bfseries 2DPCA \cite{bib_Jian} & \bfseries 2DLDA \cite{bib_Ye}& \bfseries S2DLDA \cite{bib_Inoue}& \bfseries P2DLDA \cite{bib_Inoue}& \bfseries T-LPP \cite{bib_He_Ten}& \bfseries B-SVM \cite{bib_Pirsiavash}& \bfseries CRP \\
\hline \hline
UUIm &  $88.2 \pm 1.4$  & $88.4 \pm 1.7$ & $88.6 \pm 1.6$ & $89.1 \pm 1.5$  & $89.4 \pm 1.7$  & $89.7 \pm 1.5$ & $90.2 \pm 1.4$ & $\mathbf{90.6 \pm 1.6}$\\
\hline
CVL & $87.3 \pm 1.5$  & $87.7 \pm 1.7$ & $88.2 \pm 1.4$ & $89.3 \pm 1.5$  & $89.9 \pm 1.1$  & $90.3 \pm 1.5$ & $90.7 \pm 1.4$ & $\mathbf{91.5 \pm 1.3}$\\
\hline
Pointing04(tilt) &   $50.4 \pm 1.2$  & $51.1 \pm 1.3$ & $51.3 \pm 1.4$ & $51.6 \pm 1.6$  & $52.1 \pm 1.4$  & $52.5 \pm 1.3$ & $53.3 \pm 1.5 $ & $\mathbf{53.9 \pm 1.3}$\\
\hline
Pointing04(pan) &   $37.9 \pm 1.7$   & $38.3 \pm 1.3$ & $38.6 \pm 1.5$ & $39.1 \pm 1.4$  & $38.5 \pm 1.1$  & $39.8 \pm 1.6$ & $40.5 \pm 1.4$ & $\mathbf{42.6 \pm 1.2}$\\
\hline
USPS &  $93.1 \pm 1.8$   & $93.5 \pm 1.5$ & $93.8 \pm 1.1$ & $94.1 \pm 1.2$  & $94.3 \pm 1.8$  & $94.7 \pm 1.4$ & $95.1 \pm 1.1$ & $\mathbf{95.6 \pm 1.3}$\\
\hline
COIL-20 &    $96.5 \pm 1.4$    & $95.9 \pm 1.5$ & $96.4 \pm 1.7$ & $97.5 \pm 1.8$  & $89.9 \pm 1.3$  & $98.3 \pm 1.5$ & $99.1 \pm 1.3$ & $\mathbf{99.4 \pm 0.9}$\\
\hline
\end{tabular}
\label{table_result_80percaccSVM}
\end{table*}

\begin{table*}[!ht]
\renewcommand{\arraystretch}{1.3}
\caption{\small Performance Comparison (Classification Accuracy $\pm$ std\%) of LDA, 2DPCA, 2DLDA, S2DLDA, P2DLDA, T-LPP, Bilinear SVM and CRP with three training data per class for each dataset. $1$-NN is used as a classifier.}
\newcommand{\tabincell}[2]{\begin{tabular}{@{}#1@{}}#2\end{tabular}}
\centering
\begin{tabular}{|c||c|c|c|c|c|c|c|c|}
\hline
\bfseries Dataset & \bfseries LDA & \bfseries 2DPCA \cite{bib_Jian} & \bfseries 2DLDA \cite{bib_Ye}& \bfseries S2DLDA \cite{bib_Inoue}& \bfseries P2DLDA \cite{bib_Inoue}& \bfseries T-LPP \cite{bib_He_Ten}& \bfseries B-SVM \cite{bib_Pirsiavash}& \bfseries CRP \\
\hline \hline
UUIm &   $24.3 \pm 1.8$  & $25.8 \pm 1.6$  & $27.4 \pm 1.9$ & $28.8 \pm 1.5$ & $24.2 \pm 1.2$ & $24.9 \pm 1.8$ & $28.2 \pm 1.7$ & $\mathbf{32.1 \pm 1.5}$\\
\hline
CVL &  $21.2 \pm 1.2$ & $23.4 \pm 1.9$  & $24.7 \pm 1.4$ & $29.2 \pm 1.8$ & $19.8 \pm 1.4$ & $35.2 \pm 1.1$ & $47.1 \pm 1.6$ & $\mathbf{52.1 \pm 1.3}$\\
\hline
Pointing'04(Tilt) &   $9.2 \pm 1.3$ & $9.5 \pm 1.0$ & $8.1 \pm 1.3$ & $12.9 \pm 0.8$ & $10.4 \pm 0.9$ & $15.8 \pm 1.4$ & $18.1 \pm 1.7$ & $\mathbf{23.6 \pm 0.9}$ \\
\hline
Pointing'04(Pan) &   $8.8 \pm 1.1$ & $9.7 \pm 0.9$ & $10.2 \pm 1.5$ & $9.9 \pm 1.2$ & $8.8 \pm 1.3$ & $11.1 \pm 1.4$ & $13.1 \pm 1.5$ & $\mathbf{15.1 \pm 1.1}$ \\
\hline
USPS &    $44.0 \pm 1.8$ & $44.3 \pm 1.4$ & $45.9 \pm 1.2$ & $54.2 \pm 1.6$ & $46.5 \pm 1.2$ & $52.8 \pm 1.5$ & $63.4 \pm 1.8$ & $\mathbf{72.3 \pm 1.4}$ \\
\hline
Coil20 &  $57.1 \pm 1.4$   & $58.2 \pm 1.8$ & $61.2 \pm 1.1$ & $67.6 \pm 1.8$  & $52.4 \pm 1.5$  & $60.3 \pm 1.7$ & $68.6 \pm 1.7$ & $\mathbf{74.9 \pm 1.1}$\\
\hline
\end{tabular}
\label{table_result_3accNN}
\end{table*}

\begin{table*}[!ht]
\renewcommand{\arraystretch}{1.3}
\caption{\small Performance Comparison (Classification Accuracy $\pm$ std\%) of LDA, 2DPCA, 2DLDA, S2DLDA, P2DLDA, T-LPP, Bilinear SVM and CRP with five training data per class for each dataset. 1NN is used as a classifier.}
\newcommand{\tabincell}[2]{\begin{tabular}{@{}#1@{}}#2\end{tabular}}
\centering
\begin{tabular}{|c||c|c|c|c|c|c|c|c|}
\hline
\bfseries Dataset & \bfseries LDA & \bfseries 2DPCA \cite{bib_Jian} & \bfseries 2DLDA \cite{bib_Ye}& \bfseries S2DLDA \cite{bib_Inoue}& \bfseries P2DLDA \cite{bib_Inoue}& \bfseries T-LPP \cite{bib_He_Ten}& \bfseries B-SVM \cite{bib_Pirsiavash}& \bfseries CRP \\
\hline \hline
UUIm &   $30.2 \pm 1.6$  & $31.6 \pm 1.3$ & $34.5 \pm 1.7$ & $\mathbf{35.4 \pm 1.5}$  & $33.2 \pm 1.2$  & $31.0 \pm 1.7$ & $31.2 \pm 1.8$ & $34.8 \pm 1.3$\\
\hline
CVL &   $32.8 \pm 1.3$  & $33.3 \pm 1.6$ & $35.1 \pm 1.8$ & $36.6 \pm 1.4$  & $30.8 \pm 1.2$  & $44.2 \pm 1.6$ & $49.3 \pm 1.3$ & $\mathbf{52.9 \pm 1.5}$\\
\hline
Pointing04(tilt) &   $12.1 \pm 1.8$   & $12.8 \pm 1.4$ & $14.4 \pm 1.6$ & $15.2 \pm 1.7$  & $10.2 \pm 1.4$  & $19.8 \pm 1.9$ & $19.8 \pm 1.9$ & $\mathbf{27.9 \pm 1.1}$\\
\hline
Pointing04(pan) &   $9.2 \pm 1.3$   & $14.2 \pm 1.8$ & $9.8 \pm 1.5$ & $8.8 \pm 1.7$  & $7.9 \pm 1.4$  & $10.4 \pm 1.7$ & $12.4 \pm 1.8$ & $\mathbf{16.2 \pm 1.4}$\\
\hline
USPS &     $61.2 \pm 1.8$     & $63.8 \pm 1.4$ & $64.2 \pm 1.9$ & $66.1 \pm 1.7$  & $45.3 \pm 1.3$  & $68.5 \pm 1.1$ & $72.9 \pm 1.4$ & $\mathbf{78.1 \pm 1.6}$\\
\hline
COIL-20 &   $69.6 \pm 2.2$  & $71.3 \pm 1.8$ & $72.9 \pm 1.4$ & $76.2 \pm 1.9$  & $59.2 \pm 1.3$  & $76.5 \pm 2.1$ & $78.5 \pm 1.6$ & $\mathbf{83.2 \pm 1.3}$\\
\hline
\end{tabular}
\label{table_result_5accNN}
\end{table*}

\begin{table*}[!ht]
\renewcommand{\arraystretch}{1.3}
\caption{\small Performance Comparison (Classification Accuracy $\pm$ std\%) of LDA, 2DPCA, 2DLDA, S2DLDA, P2DLDA, T-LPP, Bilinear SVM and CRP with ten training data per class for each dataset. 1NN is used as a classifier.}
\newcommand{\tabincell}[2]{\begin{tabular}{@{}#1@{}}#2\end{tabular}}
\centering
\begin{tabular}{|c||c|c|c|c|c|c|c|c|}
\hline
\bfseries Dataset & \bfseries LDA & \bfseries 2DPCA \cite{bib_Jian} & \bfseries 2DLDA \cite{bib_Ye}& \bfseries S2DLDA \cite{bib_Inoue}& \bfseries P2DLDA \cite{bib_Inoue}& \bfseries T-LPP \cite{bib_He_Ten}& \bfseries B-SVM \cite{bib_Pirsiavash}& \bfseries CRP \\
\hline \hline
UUIm &    $43.8 \pm 1.7$   & $44.2 \pm 1.3$ & $46.1 \pm 1.8$ & $47.3 \pm 1.6$  & $40.3 \pm 1.8$  & $44.9 \pm 1.4$ & $48.2 \pm 1.6$ & $\mathbf{53.2 \pm 1.4}$\\
\hline
CVL &      $47.3 \pm 1.4$  & $47.9 \pm 1.5$ & $50.3 \pm 1.8$ & $51.9 \pm 1.5$  & $46.9 \pm 1.4$  & $55.1 \pm 1.6$ & $64.2 \pm 1.9$ & $\mathbf{67.3 \pm 1.3}$\\
\hline
Pointing04(tilt) &  $22.9 \pm 1.7$   & $22.7 \pm 1.5$ & $24.8 \pm 1.3$ & $25.3 \pm 1.8$  & $14.2 \pm 1.5$  & $20.5 \pm 1.2$ & $28.8 \pm 1.3$ & $\mathbf{32.4 \pm 1.3}$\\
\hline
Pointing04(pan) &  $9.6 \pm 1.5$ & $9.3 \pm 1.2$ & $10.7 \pm 1.4$ & $10.9 \pm 1.8$  & $8.8 \pm 1.4$  & $11.3 \pm 1.7$ & $15.9 \pm 1.8$ & $\mathbf{19.2 \pm 1.5}$\\
\hline
USPS &    $68.5 \pm 1.7$   & $69.2 \pm 1.5$ & $71.8 \pm 1.4$ & $71.2 \pm 1.6$  & $64.8 \pm 1.9$  & $76.7 \pm 1.6$ & $79.4 \pm 1.9$ & $\mathbf{84.4 \pm 1.3}$\\
\hline
COIL-20 &  $85.6 \pm 1.6$ & $86.1 \pm 1.8$ & $88.8 \pm 1.4$ & $92.1 \pm 1.7$  & $80.5 \pm 1.6$  & $89.4 \pm 1.8$ & $92.8 \pm 1.5$ & $\mathbf{95.1 \pm 1.5}$\\
\hline
\end{tabular}
\label{table_result_10accNN}
\end{table*}

\begin{table*}[!ht]
\renewcommand{\arraystretch}{1.3}
\caption{\small Performance Comparison (Classification Accuracy $\pm$ std\%) of LDA, 2DPCA, 2DLDA, S2DLDA, P2DLDA, T-LPP, Bilinear SVM and CRP with 20 training data per class for each dataset. 1NN is used as a classifier.}
\newcommand{\tabincell}[2]{\begin{tabular}{@{}#1@{}}#2\end{tabular}}
\centering
\begin{tabular}{|c||c|c|c|c|c|c|c|c|}
\hline
\bfseries Dataset & \bfseries LDA & \bfseries 2DPCA \cite{bib_Jian} & \bfseries 2DLDA \cite{bib_Ye}& \bfseries S2DLDA \cite{bib_Inoue}& \bfseries P2DLDA \cite{bib_Inoue}& \bfseries T-LPP \cite{bib_He_Ten}& \bfseries B-SVM \cite{bib_Pirsiavash}& \bfseries CRP \\
\hline \hline
UUIm &     $58.0 \pm 1.3$   & $58.2 \pm 1.5$ & $61.4 \pm 1.8$ & $61.9 \pm 1.2$  & $58.3 \pm 1.8$  & $60.9 \pm 1.3$ & $61.2 \pm 1.8$ & $\mathbf{68.4 \pm 1.3}$\\
\hline
CVL &    $63.7 \pm 1.3$   & $64.1 \pm 1.8$ & $66.7 \pm 1.2$ & $67.3 \pm 1.3$  & $60.2 \pm 1.7$  & $65.2 \pm 1.9$ & $70.9 \pm 1.6$ & $\mathbf{74.2 \pm 1.6}$\\
\hline
Pointing04(tilt) &   $32.7 \pm 1.8$   & $30.2 \pm 1.3$ & $37.8 \pm 1.5$ & $38.6 \pm 1.3$  & $26.7 \pm 1.7$  & $33.6 \pm 1.9$ & $35.6 \pm 1.8$ & $\mathbf{42.3 \pm 1.4}$\\
\hline
Pointing04(pan) & $11.4 \pm 1.8$  & $11.8 \pm 1.4$ & $14.7 \pm 1.7$ & $15.2 \pm 1.4$  & $10.0 \pm 1.6$  & $13.4 \pm 1.8$ & $16.9 \pm 1.7$ & $\mathbf{22.3 \pm 1.8}$\\
\hline
USPS &   $83.1 \pm 1.3$    & $83.6 \pm 1.4$ & $84.5 \pm 1.8$ & $85.6 \pm 1.2$  & $74.7 \pm 1.5$  & $79.5 \pm 1.4$ & $86.6 \pm 1.8$ & $\mathbf{89.2 \pm 1.4}$\\
\hline
COIL-20 &   $94.0 \pm 1.4$   & $94.8 \pm 1.8$ & $95.9 \pm 1.6$ & $95.6 \pm 1.2$  & $77.9 \pm 1.8$  & $92.1 \pm 1.4$ & $96.7 \pm 1.5$ & $\mathbf{98.8 \pm 1.2}$\\
\hline
\end{tabular}
\label{table_result_20accNN}
\end{table*}

\begin{table*}[!ht]
\renewcommand{\arraystretch}{1.3}
\caption{\small Performance Comparison (Classification Accuracy $\pm$ std\%) of LDA, 2DPCA, 2DLDA, S2DLDA, P2DLDA, T-LPP, Bilinear SVM and CRP with 80\% training data for each dataset. 1NN is used as a classifier.}
\newcommand{\tabincell}[2]{\begin{tabular}{@{}#1@{}}#2\end{tabular}}
\centering
\begin{tabular}{|c||c|c|c|c|c|c|c|c|}
\hline
\bfseries Dataset & \bfseries LDA & \bfseries 2DPCA \cite{bib_Jian} & \bfseries 2DLDA \cite{bib_Ye}& \bfseries S2DLDA \cite{bib_Inoue}& \bfseries P2DLDA \cite{bib_Inoue}& \bfseries T-LPP \cite{bib_He_Ten}& \bfseries B-SVM \cite{bib_Pirsiavash}& \bfseries CRP \\
\hline \hline
UUIm &     $89.7 \pm 1.5$   & $90.1 \pm 1.8$ & $90.6 \pm 1.5$ & $91.2 \pm 1.4$  & $91.8 \pm 1.2$  & $92.3 \pm 1.4$ & $93.0 \pm 1.6$ & $\mathbf{93.5 \pm 1.5}$\\
\hline
CVL &    $90.2 \pm 1.6$   & $90.6 \pm 1.5$ & $91.1 \pm 1.4$ & $92.4 \pm 1.5$  & $92.8 \pm 1.3$  & $93.1 \pm 1.3$ & $93.4 \pm 1.5$ & $\mathbf{94.7 \pm 1.3}$\\
\hline
Pointing04(tilt) &   $47.5 \pm 1.6$   & $48.1 \pm 1.6$ & $48.5 \pm 1.2$ & $48.8 \pm 1.7$  & $50.2 \pm 1.5$  & $51.3 \pm 1.4$ & $52.1 \pm 1.7$ & $\mathbf{56.4 \pm 1.6}$\\
\hline
Pointing04(pan) & $36.9 \pm 1.5$  & $37.4 \pm 1.6$ & $37.8 \pm 1.4$ & $38.1 \pm 1.6$  & $38.5 \pm 1.4$  & $40.8 \pm 1.2$ & $41.4 \pm 1.3$ & $\mathbf{45.1 \pm 1.5}$\\
\hline
USPS &   $94.8 \pm 1.3$    & $95.2 \pm 1.1$ & $95.6 \pm 1.2$ & $96.4 \pm 1.4$  & $94.5 \pm 1.2$  & $95.1 \pm 1.1$ & $96.2 \pm 1.4$ & $\mathbf{96.8 \pm 0.9}$\\
\hline
COIL-20 &   $96.1 \pm 1.1$   & $96.4 \pm 0.8$ & $96.7 \pm 1.1$ & $97.1 \pm 0.7$  & $89.3 \pm 0.9$  & $97.5 \pm 0.8$ & $98.9 \pm 0.6$ & $\mathbf{99.6 \pm 0.3}$\\
\hline
\end{tabular}
\label{table_result_80percaccNN}
\end{table*}

\subsubsection{Performance Comparison in Two-class Setting}
In this section, we compare the proposed algorithm with the other related algorithms in a two-class setting. Three face datasets, including UMIST, yaleB, and ORL, are utilized to evaluate the performance of gender recognition, which is a binary class problem. Note that there are only two classes in this case. We project the original data into a $5^2$ dimensional subspace to exploit enough discriminant information for all the compared algorithms empirically. In this experiment, we randomly sample 3 data per class as the training data. The experimental results are shown in Tab. \ref{table_result_linear}. From the experimental results, we can observe that the proposed CRP can perform better than all the compared algorithms in the two-class setting.

\begin{table*}[!ht]
\renewcommand{\arraystretch}{1.3}
\caption{Performance Comparison w.r.t two-class setting (Classification Accuracy$\pm$ std\%) of LDA, 2DPCA, 2DLDA, S2DLDA, P2DLDA, T-LPP, Bilinear SVM and CRP with 80\% training data for each dataset.}
\newcommand{\tabincell}[2]{\begin{tabular}{@{}#1@{}}#2\end{tabular}}
\centering
\begin{tabular}{|c||c|c|c|c|c|c|c|c|}
\hline
\bfseries Dataset & \bfseries LDA & \bfseries 2DPCA \cite{bib_Jian} & \bfseries 2DLDA \cite{bib_Ye}& \bfseries S2DLDA \cite{bib_Inoue}& \bfseries P2DLDA \cite{bib_Inoue}& \bfseries T-LPP \cite{bib_He_Ten}& \bfseries B-SVM \cite{bib_Pirsiavash}& \bfseries CRP \\
\hline \hline
ORL &  $89.8 \pm 1.5$   & $90.5 \pm 1.2$  & $91.1 \pm 1.4$ & $91.7 \pm 1.3$ & $92.3 \pm 1.1$ & $92.9 \pm 1.4$ & $93.3 \pm 1.3$ & $\mathbf{93.9 \pm 1.2}$\\
\hline
UMIST &    $83.1 \pm 1.4$  & $84.2 \pm 1.6$  & $84.7 \pm 1.5$ & $85.7 \pm 1.4$ & $86.2 \pm 1.8$ & $86.5 \pm 1.3$ & $87.2 \pm 1.6$ & $\mathbf{89.4 \pm 1.1}$\\
\hline
yaleB &  $74.6 \pm 1.4$  & $75.1 \pm 1.6$ & $75.6 \pm 1.7$ & $76.2 \pm 1.3$ & $76.5 \pm 1.6$ & $76.9 \pm 1.3$ & $77.4 \pm 1.8$ & $\mathbf{78.6 \pm 1.4}$ \\
\hline
\end{tabular}
\label{table_result_linear}
\end{table*}

\section{Conclusion}
It is more natural to represent the real world application data as matrices since we can preserve the spatial correlations while avoiding the curse of dimensionality. In this paper, we propose a novel compound rank-k projection algorithm for bilinear analysis. Our approach directly deal with the matrices. In this way, the spatial correlations can be preserved, and computation complexity can be decreased.  Our approach achieves better performance than the classical two-dimensional linear discriminant analysis because our method exploits the multiple projection models and promises a monotonic increase of the objective function value. Hence, the optimum can be obtained. The major novelty of our approach is that multiple projection models are used to provide a larger space in which the local optimal is obtained. Consequently, CRP can get better performances. 

In machine learning and statistics, ensemble methods use multiple learning algorithms to obtain better predictive performance than could be obtained from any of the constituent learning algorithms. In some sense, the proposed algorithm has a close relationship to ensemble learning. Our algorithm differs from ensemble learning in the following two aspects: First, in a typical ensemble learning scenario, the multiple learning algorithms usually adopt different criteria, e.g., to combine the SVM classifier and the least regression classifier for classification. Differently, the proposed algorithm is a single algorithm (as opposed multiple algorithms) designed in a single framework, i.e., the Linear Discriminant Analysis framework. Second, in ensemble learning, the output of a single constituent learning algorithm can be directly regarded as the final results. For example, the output of an SVM classifier can be used as the classification result, even though the performance could be improved if we combine SVM with least square regression. In contrast, the output of each projection model of our algorithm corresponds to only one dimension of the subspace, and thus we cannot use the output of a single model to represent the original data. We must use all the  models to obtain a -dimensional representation.

Our experimental results validate that our proposed algorithm outperform the other compared algorithms in terms of classification accuracy using two different classifiers.

\appendix[Lemmas used in this paper]

\begin{lemma}\label{lemma1}
\begin{equation}\nonumber
vec(UV^T)(vec(UV^T))^Tvec(X) = Tr(U^TXV)vec(UV^T),
\end{equation} where $vec(\cdot)$ represents the vectorization of a matrix. 
\end{lemma}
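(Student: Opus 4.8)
The plan is to recognize that the quantity $(vec(UV^T))^T vec(X)$ is a scalar, rewrite it as a trace, and then factor it out of the rank-one product. The whole lemma rests on a single bridging identity: for any two conformable matrices $A$ and $B$, the Euclidean inner product of their vectorizations equals a trace, namely $(vec(A))^T vec(B) = Tr(A^T B)$. This is the standard correspondence between the vectorized (Frobenius) inner product and the trace, and once it is invoked the statement follows almost immediately.

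Applying this identity with $A = UV^T$ and $B = X$, I would write
\[
(vec(UV^T))^T vec(X) = Tr((UV^T)^T X) = Tr(V U^T X).
\]
The cyclic invariance of the trace then gives $Tr(V U^T X) = Tr(U^T X V)$, so the inner product equals exactly the scalar $Tr(U^T X V)$ that appears on the right-hand side of the claim.

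Finally, I would emphasize that $(vec(UV^T))^T vec(X)$ is a $1\times 1$ quantity, i.e. an ordinary scalar, which therefore commutes freely past the vector $vec(UV^T)$. Grouping the left-hand side as $vec(UV^T)\,\bigl[(vec(UV^T))^T vec(X)\bigr]$ and substituting the computed scalar yields $Tr(U^T X V)\,vec(UV^T)$, which is precisely the asserted equality.

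There is no genuine obstacle here: the result is essentially a one-line consequence of the vectorization--trace identity together with the cyclic property of the trace. The only point requiring care is the bracketing, so that the reader sees clearly that the rank-one operator $vec(UV^T)(vec(UV^T))^T$ acting on $vec(X)$ collapses to a scalar multiple of $vec(UV^T)$, and that the scalar is correctly transposed into the form $Tr(U^T X V)$ rather than, say, $Tr(V^T X^T U)$.
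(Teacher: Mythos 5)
Your proof is correct, but it takes a genuinely different route from the paper's. The paper proves this lemma by first invoking its Lemma~2, $vec(UV^T)=\sum_i v_i\otimes u_i$, expanding the rank-one matrix $vec(UV^T)(vec(UV^T))^T$ into the double sum $\sum_{i,j}(v_iv_j^T\otimes u_iu_j^T)$ via Kronecker-product algebra, and then recombining terms after applying this to $vec(X)$ to recover $Tr(U^TXV)\,vec(UV^T)$. You instead observe that $(vec(UV^T))^Tvec(X)$ is a scalar, identify it as $Tr((UV^T)^TX)=Tr(U^TXV)$ through the Frobenius inner-product identity $(vec(A))^Tvec(B)=Tr(A^TB)$ and cyclic invariance of the trace, and pull the scalar out of the rank-one product. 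Your argument is shorter, avoids the columnwise Kronecker decomposition entirely, and is in fact built on the very identity the paper itself uses later in the proofs of its Lemmas~4 and~6, so it is arguably more consistent with the paper's own toolkit; it also sidesteps the somewhat loose intermediate displays in the paper's version (a duplicated line and an unexplained lowercase $x$). What the paper's route buys in exchange is an explicit view of the structure of the rank-one projector in terms of Kronecker products of the columns of $U$ and $V$, which foreshadows the Kronecker-product reformulations used in the optimization section. Both proofs are valid; yours is the more economical.
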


\begin{proof}
By substituting Lemma \ref{lemma2}, we can obtain

\begin{equation}\nonumber
\begin{split}
vec(UV^T)(vec(UV^T))^T & =\sum\limits_i v_i \otimes u_i (\sum\limits_i v_i \otimes u_i)^T\\
& =\sum\limits_{i,j} (v_i \otimes u_i)(v_j \otimes u_j)^T
\end{split}
\end{equation}

According to the properties of Kronecker product, we can obtain

\begin{equation}\nonumber
\begin{split}
\sum\limits_{i,j} (v_i \otimes u_i)(v_j \otimes u_j)^T & = \sum_{i,j} (v_iv_j^T \otimes u_iu_j^T)
\end{split}
\end{equation}

By multiplying $vec(X)$, we can get 

\begin{equation}\nonumber
\begin{split}
\sum_{i,j} (v_iv_j^T \otimes u_iu_j^T) vec(X) & = vec(\sum_j u_j^Txv_j \sum_i u_iv_i^T)\\
& = vec(\sum_j u_j^Txv_j \sum_i u_iv_i^T)\\
& = Tr(U^TXV)vec(UV^T)
\end{split}
\end{equation}

The result follows.
\end{proof}

\begin{lemma}\label{lemma2}
\begin{equation}\nonumber
vec(UV^T) = \sum_i v_i \otimes u_i,
\end{equation} where $\otimes$ is the Kronecker product.
\end{lemma}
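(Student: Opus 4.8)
The plan is to reduce the identity to its rank-one building blocks and then invoke linearity of the vectorization operator. First I would write $U$ and $V$ in terms of their columns, $U = [u_1, \ldots, u_k]$ and $V = [v_1, \ldots, v_k]$, so that the standard outer-product expansion of a matrix product gives $UV^T = \sum_i u_i v_i^T$. Since $vec(\cdot)$ is a linear map, it commutes with this finite sum, and the whole claim reduces to establishing the single rank-one identity $vec(u_i v_i^T) = v_i \otimes u_i$ for each $i$.

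Next I would prove that rank-one identity directly from the column-stacking convention for $vec$. The $j$-th column of $u_i v_i^T$ equals $(v_i)_j\, u_i$, namely the $j$-th entry of $v_i$ scaling the vector $u_i$. Stacking these columns produces the block vector whose $j$-th block is $(v_i)_j\, u_i$, which is precisely the definition of the Kronecker product $v_i \otimes u_i$. Alternatively, one may simply cite the standard identity $vec(ABC) = (C^T \otimes A)\,vec(B)$ with $A = u_i$, $B = 1$, and $C = v_i^T$, which collapses to the same statement.

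Summing over $i$ and applying linearity then yields $vec(UV^T) = \sum_i vec(u_i v_i^T) = \sum_i v_i \otimes u_i$, which is exactly the asserted formula. This is the identity that Lemma \ref{lemma1} already invokes when it writes $vec(UV^T) = \sum_i v_i \otimes u_i$, so the two lemmas fit together cleanly.

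The only genuine subtlety, and the step I would treat most carefully, is the ordering inside the Kronecker product. Because $vec$ stacks columns rather than rows, the factor coming from $V$ must appear on the \emph{left} ($v_i \otimes u_i$) rather than the right; adopting the opposite stacking convention would instead produce the transposed block arrangement $u_i \otimes v_i$, which is the wrong answer. Fixing the column-stacking convention at the outset removes this ambiguity, after which every remaining step is a routine verification.
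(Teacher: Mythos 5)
Your proof is correct, but it is worth noting that the paper does not actually prove this lemma at all: its entire ``proof'' is a pointer to page 26 of Steeb's book on the Kronecker product. So your argument is not a variant of the paper's---it is the self-contained derivation the paper omits. Your route (write $UV^T=\sum_i u_iv_i^T$ as a sum of rank-one outer products, pull the linear map $vec(\cdot)$ through the sum, and check $vec(u_iv_i^T)=v_i\otimes u_i$ column by column from the column-stacking convention) is the standard textbook argument and is sound; the block whose $j$-th piece is $(v_i)_j\,u_i$ is indeed $v_i\otimes u_i$, and your warning about the ordering of the factors is exactly the point where sign-convention errors usually creep in. Your alternative of invoking $vec(AXB)=(B^T\otimes A)\,vec(X)$ with $X$ the $1\times 1$ identity also works and would let you derive this lemma from the paper's Lemma~\ref{lemma5}, whose proof is independent, so no circularity arises. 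What your approach buys is a paper that is self-contained on a fact that Lemmas~\ref{lemma1} and~\ref{lemma3} both lean on; what the citation buys is brevity. Either is acceptable, but if the lemma is going to be stated and labelled as a lemma, your two-line verification is preferable to an external reference.
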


\begin{proof}
Referring to Page 26, Chapter 2 of "Matrix Calculus and the Kronecker Product with Applications and C++ Programs" \cite{bib_Steeb}.
\end{proof}

\begin{lemma}\label{lemma3}
\begin{equation}\nonumber
vec(UV^T)^Tvec(UV^T)=Tr(U^TUV^TV)
\end{equation}
\end{lemma}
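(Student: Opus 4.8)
The plan is to reduce the claimed identity to the elementary fact that the Euclidean inner product of two vectorized matrices equals the trace of the product of one with the transpose of the other, namely $vec(A)^Tvec(B) = Tr(A^TB)$. Taking $A = B = UV^T$ immediately gives $vec(UV^T)^Tvec(UV^T) = Tr((UV^T)^T(UV^T)) = Tr(VU^TUV^T)$, after which the cyclic invariance of the trace rearranges this to $Tr(U^TUV^TV)$, which is exactly the right-hand side. This route is the shortest, but I would prefer to stay inside the toolkit already developed in the appendix.

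Accordingly, I would instead derive the same result directly from Lemma \ref{lemma2}. Writing $vec(UV^T) = \sum_i v_i \otimes u_i$, the left-hand side expands as the double sum $\sum_{i,j}(v_i \otimes u_i)^T(v_j \otimes u_j)$. First I would apply the transpose rule $(v_i \otimes u_i)^T = v_i^T \otimes u_i^T$ together with the mixed-product property of the Kronecker product, which collapse each term to the scalar $(v_i^Tv_j)(u_i^Tu_j)$. This is the one genuinely computational step, and it relies only on the same Kronecker-product properties already invoked in the proof of Lemma \ref{lemma1}, so it introduces nothing new.

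The remaining task is purely bookkeeping. I would recognize that the columns of $U$ and $V$ furnish the Gram-matrix entries $(U^TU)_{ij} = u_i^Tu_j$ and $(V^TV)_{ij} = v_i^Tv_j$, so that $Tr(U^TUV^TV) = \sum_{i,j}(U^TU)_{ij}(V^TV)_{ji} = \sum_{i,j}(u_i^Tu_j)(v_j^Tv_i)$. Since each factor is a scalar we have $v_j^Tv_i = v_i^Tv_j$, and the two double sums match term by term. The main obstacle, such as it is, lies only in this index matching between the entries of the Gram matrices $U^TU$ and $V^TV$ and the terms of the expanded inner product; no deeper structural difficulty arises, since the identity is entirely algebraic.
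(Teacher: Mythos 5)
Your preferred derivation is correct and follows essentially the same route as the paper's own proof: expand $vec(UV^T)$ via Lemma~\ref{lemma2}, collapse each Kronecker term to the scalar $(v_i^Tv_j)(u_i^Tu_j)$ by the mixed-product property, and match the resulting double sum with the Gram-matrix expansion of $Tr(U^TUV^TV)$. The one-line alternative you mention first, $vec(A)^Tvec(B)=Tr(A^TB)$ followed by cyclic invariance of the trace, is also valid and is in fact the same identity the paper invokes in Lemmas~\ref{lemma4} and~\ref{lemma6}, so either route is consistent with the appendix's toolkit.
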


\begin{proof}
Substituting Lemma \ref{lemma2}, we can obtain
\begin{equation}\nonumber
\begin{split}
(vec(UV^T))^Tvec(UV^T)
& =(\sum\limits_i v_i \otimes u_i)^T \sum\limits_i (v_i \otimes u_i)\\
& =\sum\limits_{i,j} (v_j \otimes u_j)^T(v_i \otimes u_i)\\
\end{split}
\end{equation}

For any two vectors, $u$ and $v$, we have properties of Kronecker product as follows:

\begin{equation}\nonumber
\begin{split}
\sum\limits_{i,j} (v_j \otimes u_j)^T(v_i \otimes u_i) & = \sum_{i,j} (v_j^Tv_i \otimes  u_j^Tu_i)\\
& = \sum_{i,j} (v_j^Tv_iu_j^Tu_i) 
\end{split}
\end{equation}

According to properties of dot product, we have

\begin{equation}\nonumber
\begin{split}
\sum_{i,j} u_j^Tu_i = \sum_{i,j} u_i^Tu_j
\end{split}
\end{equation}

Hence we can obtain

\begin{equation}\nonumber
\begin{split}
\sum_{i,j} (v_j^Tv_iu_i^Tu_j) & = \sum_{i,j} (u_i^T(u_jv_j^T)v_i)\\
& = \sum_i (u_i^T \sum_j (u_jv_j^T)v_i)\\
& = Tr(U^TUV^TV)
\end{split}
\end{equation}
The result follows.
\end{proof}

\begin{lemma}\label{lemma5}
Suppose A, B, X are matrices, we have $vec(AXB)=(B^T \otimes A)vec(X)$.
\end {lemma}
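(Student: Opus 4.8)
The plan is to reduce the identity to the rank-one case that Lemma \ref{lemma2} already handles, and then recover the general statement by linearity. First I would fix dimensions, taking $X \in \mathbb{R}^{m \times n}$, $A \in \mathbb{R}^{p \times m}$ and $B \in \mathbb{R}^{n \times q}$ so that $AXB \in \mathbb{R}^{p \times q}$, and expand $X$ in the canonical basis as $X = \sum_{i,j} X_{ij} e_i e_j^T$, where $e_i$ and $e_j$ denote the appropriate standard basis vectors. Since both $vec(\cdot)$ and the map $X \mapsto AXB$ are linear, it is enough to verify the claimed equality on each rank-one building block $e_i e_j^T$ and then sum.

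For one such block, I would compute $A (e_i e_j^T) B = (A e_i)(e_j^T B) = (A e_i)(B^T e_j)^T$, which is exactly the outer product of the $i$-th column of $A$ with the $j$-th column of $B^T$. Applying Lemma \ref{lemma2} to this single outer product gives $vec(A e_i e_j^T B) = (B^T e_j) \otimes (A e_i)$. On the other side, Lemma \ref{lemma2} applied to $e_i e_j^T$ itself yields $vec(e_i e_j^T) = e_j \otimes e_i$, so that $(B^T \otimes A)\, vec(e_i e_j^T) = (B^T \otimes A)(e_j \otimes e_i)$.

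The crux is then the Kronecker mixed-product property $(C \otimes D)(E \otimes F) = (CE) \otimes (DF)$, which collapses $(B^T \otimes A)(e_j \otimes e_i)$ precisely to $(B^T e_j) \otimes (A e_i)$, matching the expression obtained for $vec(A e_i e_j^T B)$. Summing over all $i$ and $j$ and invoking linearity of $vec(\cdot)$ and of $X \mapsto AXB$ then gives $vec(AXB) = (B^T \otimes A)\, vec(X)$, as required. I expect the only genuine obstacle to be bookkeeping: keeping the four matrix dimensions compatible in the mixed-product step and tracking the double index $(i,j)$ cleanly. Once the rank-one reduction is set up, the argument is essentially a one-line application of Kronecker-product facts already exploited elsewhere in this appendix, so no delicate estimates are involved.
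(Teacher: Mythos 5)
Your proof is correct, but it takes a different route from the paper's. The paper argues column-by-column on the output: it writes the $k$-th column of $AXB$ as $\sum_j B_{jk}\,A X_j$ (with $X_j$ the $j$-th column of $X$), recognizes this as the block row $[B_{1k}A~\cdots~B_{nk}A]$ acting on the stacked columns of $X$, i.e.\ $(B_k^T\otimes A)\,vec(X)$, and stacks over $k$. You instead expand $X=\sum_{i,j}X_{ij}\,e_ie_j^T$ and verify the identity on each rank-one block, using Lemma~\ref{lemma2} to vectorize the outer products and the mixed-product rule $(C\otimes D)(E\otimes F)=(CE)\otimes(DF)$ to match the two sides. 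Both are sound; the paper's computation is more direct and self-contained (it never needs the mixed-product rule in this proof, only the definition of the Kronecker product as a block matrix), while yours is more modular, reusing Lemma~\ref{lemma2} and a standard Kronecker identity that the paper already invokes informally in the proofs of Lemmas~\ref{lemma1} and~\ref{lemma3}, so your reliance on it is consistent with the appendix's level of rigor. The only caveat is that the mixed-product property is itself unproved here, so strictly speaking you trade one unproved Kronecker fact for another; if you wanted the argument fully self-contained you would verify $(B^T\otimes A)(e_j\otimes e_i)=(B^Te_j)\otimes(Ae_i)$ directly from the block structure of $B^T\otimes A$, which essentially recovers the paper's computation.
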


\begin{proof}
As proved in \cite{bib_Steeb}, the one vector of order $n$ obeys the relation $e = \sum e_i$.

Similarly, 
\begin{equation}\nonumber
\begin{split}
(AXB)_k & = \sum_{j}(B_{jk}A)X_j\\
& = [B_{1k}A~B_{2k}A ... B_{nk}A] \left[\begin{array}{c}X_1\\X_2\\...\\X_n
\end{array}\right] \\
& = [B_k^T \otimes A]vec(X)\\
\end{split}
\end{equation}

The result follows.
\end{proof}

\begin{lemma}\label{lemma4}
\begin{equation}\nonumber
Tr(A^TBC)=vec(A)^T(I \otimes B)vec(C)
\end{equation}
\end{lemma}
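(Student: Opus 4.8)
The plan is to reduce the claimed identity to the elementary trace--vectorization relation $Tr(P^TQ)=vec(P)^Tvec(Q)$ together with Lemma~\ref{lemma5}, which already expresses the vectorization of a matrix product as a Kronecker product acting on a single vectorized factor. Neither ingredient requires new machinery, so the argument should be short and purely computational: I essentially want to peel $BC$ apart, vectorize it, and recognize the Kronecker factor.

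First I would record the basic fact that for any two matrices $P$ and $Q$ of the same size,
\begin{equation}\nonumber
Tr(P^TQ)=\sum_{i,j}P_{ij}Q_{ij}=vec(P)^Tvec(Q),
\end{equation}
which is immediate from comparing entries on both sides, and which is the same relation already used implicitly in Lemma~\ref{lemma3}. Applying this with $P=A$ and $Q=BC$ gives $Tr(A^TBC)=vec(A)^Tvec(BC)$, so the entire problem reduces to rewriting $vec(BC)$ as $(I\otimes B)vec(C)$.

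Next I would invoke Lemma~\ref{lemma5}, namely $vec(AXB)=(B^T\otimes A)vec(X)$, with the substitutions $A\mapsto B$, $X\mapsto C$, and $B\mapsto I$. Since $BC=B\,C\,I$, this yields
\begin{equation}\nonumber
vec(BC)=(I^T\otimes B)vec(C)=(I\otimes B)vec(C),
\end{equation}
where I have used $I^T=I$. Substituting back into the displayed trace identity produces $Tr(A^TBC)=vec(A)^T(I\otimes B)vec(C)$, which is exactly the claim.

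I do not expect a genuine obstacle here; the only points deserving care are bookkeeping. I must check that the dimensions in the substitution into Lemma~\ref{lemma5} are consistent: the identity $I$ has to be the identity whose order matches the column dimension of $C$ (equivalently of $A$ and of $BC$), so that the factor $B^T\otimes A$ collapses precisely to $I\otimes B$ of the correct size $mn\times kn$. I would also confirm that the trace--vec identity invoked at the start is the appropriate specialization of the relation underlying the earlier lemmas. Once these consistency checks are completed, the chain of equalities closes immediately and the result follows.
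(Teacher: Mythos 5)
Your proposal is correct and follows essentially the same route as the paper's proof: both reduce the claim to the identity $Tr(A^TB)=vec(A)^Tvec(B)$ and then apply Lemma~\ref{lemma5} to $BC=BCI$ to obtain $vec(BC)=(I\otimes B)vec(C)$. The only difference is that you explicitly flag the dimension-consistency check, which the paper leaves implicit.
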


\begin{proof}

By the trace definition, we have
\begin{equation}\nonumber
\begin{split}
Tr(A^TB) & = vec(A)^Tvec(B) 
\end{split}
\end{equation}

According to Lemma \ref{lemma5}, we can obtain

\begin{equation}\nonumber
\begin{split}
vec(BC) & = vec(BCI)\\
& = (I \otimes B)vec(C)
\end{split}
\end{equation}

By incorporating the above two equations, we can get
\begin{equation}\nonumber
\begin{split}
Tr(A^TBC) & =vec(A)^Tvec(BC)\\
& =vec(A)^T(I \otimes B)vec(C)
\end{split}
\end{equation}

The result follows.
\end{proof}

\begin{lemma}\label{lemma6}
\begin{equation}\nonumber
Tr(A^TBC)=vec(A)^T(C^T \otimes I)vec(B)
\end{equation}
\end{lemma}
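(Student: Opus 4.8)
The plan is to mirror the proof of Lemma~\ref{lemma4}, but to choose a grouping of the product $BC$ that isolates $vec(B)$ rather than $vec(C)$. The two ingredients I would rely on are the elementary trace--vectorization identity $Tr(A^TB)=vec(A)^Tvec(B)$, which already appears at the start of the proof of Lemma~\ref{lemma4}, and Lemma~\ref{lemma5}, which expresses $vec(AXB)=(B^T\otimes A)vec(X)$.

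First I would write $Tr(A^TBC)=Tr(A^T(BC))$ and apply the trace--vectorization identity to the pair $A$ and $BC$, obtaining $Tr(A^TBC)=vec(A)^Tvec(BC)$. This reduces the problem to computing $vec(BC)$ in a form where $B$ is the ``inner'' matrix being vectorized. Next I would apply Lemma~\ref{lemma5} with the factorization $BC=I\cdot B\cdot C$, that is, taking the left factor to be the identity $I$, the middle (vectorized) factor to be $B$, and the right factor to be $C$. Lemma~\ref{lemma5} then gives $vec(BC)=vec(IBC)=(C^T\otimes I)vec(B)$. Substituting this into the previous step yields $Tr(A^TBC)=vec(A)^T(C^T\otimes I)vec(B)$, which is exactly the claimed identity.

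The only real point of care --- and the contrast with Lemma~\ref{lemma4} --- is the choice of grouping fed into Lemma~\ref{lemma5}: there $BC$ is read as $B\cdot C\cdot I$ so that $C$ is the vectorized factor and one obtains $(I\otimes B)vec(C)$, whereas here $BC$ is read as $I\cdot B\cdot C$ so that $B$ is vectorized and the Kronecker factor lands on the right as $C^T\otimes I$. No further calculation is needed; the sizes of the two identity blocks are forced by the dimensions of $B$ and $C$, and the result follows immediately once the correct factorization is selected. Because the argument is essentially a one-line specialization of an already-proved lemma, I do not anticipate any genuine obstacle beyond keeping the Kronecker ordering straight.
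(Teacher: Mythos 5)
Your proposal is correct and follows essentially the same route as the paper: reduce via $Tr(A^TB)=vec(A)^Tvec(B)$ to $vec(A)^Tvec(BC)$, then vectorize $BC$ as $I\cdot B\cdot C$ using Lemma~\ref{lemma5} to obtain $(C^T\otimes I)vec(B)$. The paper leaves the appeal to Lemma~\ref{lemma5} implicit in this second step, but the argument is identical.
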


\begin{proof}
Similarly, according to the trace definition, we have 
\begin{equation}\nonumber
Tr(A^TB)=vec(A)^Tvec(B),
\end{equation}
so 
\begin{equation}\nonumber
\begin{split}
Tr(A^TBC) & =vec(A)^Tvec(BC)\\
& =vec(A)^T(C^T \otimes I)vec(B)
\end{split}
\end{equation}.

The result follows.
\end{proof}

%

{
\bibliographystyle{IEEEtran}
\bibliography{dma}

\begin{thebibliography}{10}
\providecommand{\url}[1]{#1}
\csname url@samestyle\endcsname
\providecommand{\newblock}{\relax}
\providecommand{\bibinfo}[2]{#2}
\providecommand{\BIBentrySTDinterwordspacing}{\spaceskip=0pt\relax}
\providecommand{\BIBentryALTinterwordstretchfactor}{4}
\providecommand{\BIBentryALTinterwordspacing}{\spaceskip=\fontdimen2\font plus
\BIBentryALTinterwordstretchfactor\fontdimen3\font minus
  \fontdimen4\font\relax}
\providecommand{\BIBforeignlanguage}[2]{{%
\expandafter\ifx\csname l@#1\endcsname\relax
\typeout{** WARNING: IEEEtran.bst: No hyphenation pattern has been}%
\typeout{** loaded for the language `#1'. Using the pattern for}%
\typeout{** the default language instead.}%
\else
\language=\csname l@#1\endcsname
\fi
#2}}
\providecommand{\BIBdecl}{\relax}
\BIBdecl

\bibitem{bib_Jieping}
J.~Ye, R.~Janardan, Q.~Li, and H.~Park, ``Feature reduction via generalized
  uncorrelated linear discriminant analysis,'' \emph{IEEE Trans. Knowl. Data
  Eng.}, vol.~18, no.~10, pp. 1312--1322, 2006.

\bibitem{bib_Zou}
B.~Zou, L.~Li, Z.~Xu, T.~Luo, and Y.~Y. Tang, ``Generalization performance of
  fisher linear discriminant based on markov sampling,'' \emph{IEEE Trans.
  Neural Netw. Learn. Syst.}, vol.~24, no.~2, pp. 288--300, 2012.

\bibitem{bib_Yang_Wen}
W.-H. Yang, D.-Q. Dai, and H.~Yan, ``Feature extraction and uncorrelated
  discriminant analysis for high-dimensional data,'' \emph{IEEE Trans. Knowl.
  Data Eng.}, vol.~20, no.~5, pp. 601--614, 2008.

\bibitem{LRLDA}
Y.~Pang, S.~Wang, and Y.~Yuan, ``Learning regularized lda by clustering,''
  \emph{IEEE Trans. Neural Netw. Learn. Syst.}, 2014.

\bibitem{bib_Belhumeur}
P.~N. Belhumeur, J.~P. Hespanha, and D.~Kriegman, ``Eigenfaces vs. fisherfaces:
  Recognition using class specific linear projection,'' \emph{IEEE Trans.
  Pattern Anal. Mach. Intell.}, vol.~19, no.~7, pp. 711--720, 1997.

\bibitem{bib_Ma}
Z.~Ma, Y.~Yang, F.~Nie, and N.~Sebe, ``Thinking of images as what they are:
  Compound matrix regression for image classification,'' in \emph{IJCAI}, 2013.

\bibitem{HouNYW13}
C.~Hou, F.~Nie, D.~Yi, and Y.~Wu, ``Efficient image classification via multiple
  rank regression,'' \emph{{IEEE} Transactions on Image Processing}, vol.~22,
  no.~1, pp. 340--352, 2013.

\bibitem{Algebraicfs}
K.~Liu, Y.-Q. Cheng, and J.-Y. Yang, ``Algebraic feature extraction for image
  recognition based on an optimal discriminant criterion,'' \emph{Pattern
  Recog.}, vol.~26, no.~6, pp. 903--911, 1993.

\bibitem{bib_Ye}
J.~Ye, R.~Janardan, and Q.~Li, ``Two-dimensional linear discriminant
  analysis,'' in \emph{NIPS}, 2004.

\bibitem{bib_Inoue}
K.~Inoue and K.~Urahama, ``Non-iterative two-dimensional linear discriminant
  analysis,'' \emph{ICPR}, 2006.

\bibitem{bib_fan}
Z.~Fan, Y.~Xu, and D.~Zhang, ``Local linear discriminant analysis framework
  using sample neighbors,'' \emph{IEEE Trans. Neural Netw.}, vol.~22, no.~7,
  pp. 1119--1132, 2011.

\bibitem{bib_Franck}
F.~Dufrenois and J.~C. Noyer, ``Formulating robust linear regression estimation
  as a one-class lda criterion: Discriminative hat matrix,'' \emph{IEEE Trans.
  Neural Netw. Learn. Syst.}, vol.~24, no.~2, pp. 262--273, 2012.

\bibitem{bib_Xiong}
H.~Xiong, M.~N.~S. Swamy, and M.~O. Ahmad, ``Two-dimensional fld for face
  recognition,'' \emph{Pattern Recog.}, vol.~38, no.~7, pp. 1121--1124, 2005.

\bibitem{bib_YangYi}
Y.~Yang, F.~Nie, D.~Xu, J.~Luo, Y.~Zhuang, and Y.~Pan, ``A multimedia retrieval
  framework based on semi-supervised ranking and relevance feedback,''
  \emph{IEEE Trans. Pattern Anal. Mach. Intell.}, vol.~34, no.~4, pp. 723--742,
  2012.

\bibitem{rank1}
A.~Shashua and A.~Levin, ``Linear image coding for regression and
  classification using the tensor-rank principle,'' in \emph{CVPR}, 2001.

\bibitem{bib_Li}
M.~Li and B.~Yuan, ``2d-lda: A statistical linear discriminant analysis for
  image matrix,'' \emph{Pattern Recognit. Letters}, vol.~26, no.~5, pp.
  527--532, 2005.

\bibitem{bib_Deng}
D.~Cai, X.~He, and J.~Han, ``Srda: An efficient algorithm for large-scale
  discriminant analysis,'' \emph{IEEE Trans. Kowl. Data Eng.}, vol.~20, no.~1,
  pp. 1--12, 2008.

\bibitem{bib_Bingbing}
B.~Ni, S.~Yan, and A.~Kassim, ``Learning a propagable graph for semisupervised
  learning: classification and regression,'' \emph{IEEE Trans. Knowl. Data
  Eng.}, vol.~24, no.~1, pp. 114--126, 2012.

\bibitem{YanLRS13}
Y.~Yan, E.~Ricci, S.~Ramanathan, G.~Liu, and N.~Sebe, ``Multi-task linear
  discriminant analysis for view invariant action recognition,'' \emph{IEEE
  Transactions on Image Processing}, vol.~42, no.~1, pp. 105--114, 2009.

\bibitem{YanRLSS14}
Y.~Yan, E.~Ricci, G.~Liu, R.~Subramanian, and N.~Sebe, ``Clustered multi-task
  linear discriminant analysis for view invariant color-depth action
  recognition,'' in \emph{ICPR}, 2014.

\bibitem{bib_Kim}
H.~Kim, B.~L. Drake, and H.~Park, ``Adaptive nonlinear discriminant analysis by
  regularized minimum squared errors,'' \emph{IEEE Trans. Knowl. Data Eng.},
  vol.~18, no.~5, pp. 603--612, 2006.

\bibitem{bib_Bandos}
T.~V. Bandos, L.~Bruzzone, and G.~Camps-Valls, ``Classification of
  hyperspectral images with regularized linear discriminant analysis,''
  \emph{IEEE Trans. Geos. Remo. Sens.}, vol.~47, no.~3, pp. 862--873, 2009.

\bibitem{WuWYZN10}
F.~Wu, Y.~Han, Q.~Tian, and Y.~Zhuang, ``Multi-label boosting for image
  annotation by structural grouping sparsity,'' in \emph{ACM MM}, 2010.

\bibitem{HanWJZY10}
Y.~Han, F.~Wu, J.~Jia, Y.~Zhuang, and B.~Yu, ``Multi-task sparse discriminant
  analysis (mtsda) with overlapping categories,'' in \emph{AAAI}, 2010.

\bibitem{NieXSZ09}
F.~Nie, S.~Xiang, Y.~Song, and C.~Zhang, ``Extracting the optimal
  dimensionality for local tensor discriminant analysis,'' \emph{Pattern
  Recognition}, vol.~42, no.~1, pp. 105--114, 2009.

\bibitem{bib_Hou}
Y.~Hou, L.~Song, H.-K. Min, and C.~H. Park, ``Complexity-reduced scheme for
  feature extraction with linear discriminant analysis,'' \emph{IEEE Trans.
  Neural Netw. Learn. Syst.}, vol.~23, no.~6, pp. 1003--1009, 2012.

\bibitem{bib_Stuhlsatz}
A.~Stuhlsatz, J.~Lippel, and T.~Zielke, ``Feature extraction with deep neural
  networks by a generalized discriminant analysis,'' \emph{IEEE Trans. Neural
  Netw. Learn. Syst.}, vol.~23, no.~4, pp. 596--608, 2012.

\bibitem{bib_Zafeiriou}
S.~Zafeiriou, G.~Tzimiropoulos, M.~Petrou, and T.~Stathaki, ``Regularized
  kernel discriminant analysis with a robust kernel for face recognition and
  verification,'' \emph{IEEE Trans. Neural Netw. Learn. Syst.}, vol.~23, no.~3,
  pp. 526--534, 2012.

\bibitem{bib_Jia}
Y.~Jia, F.~Nie, and C.~Zhang, ``Trace ratio problem revisited,'' \emph{IEEE
  Trans. Neural Netw.}, vol.~20, no.~4, pp. 729--735, 2009.

\bibitem{bib_Yang}
Y.~Yang, D.~Xu, F.~Nie, S.~Yan, and Y.~Zhuang, ``Image clustering using local
  discriminant models and global integration,'' \emph{IEEE Trans. Image
  Process.}, vol.~19, no.~10, pp. 2761--2773, 2010.

\bibitem{optimaldim}
F.~Nie, S.~Xiang, Y.~Song, and C.~Zhang, ``Extracting the optimal
  dimensionality for local tensor discriminant analysis,'' \emph{Pattern
  Recognition}, pp. 105--114, 2009.

\bibitem{bib_Jian}
J.~Yang, D.~Zhang, A.~F. Frangi, and J.~yu~Yang, ``Two-dimensional pca: a new
  approach to appearance-based face representation and recognition,''
  \emph{IEEE Trans. Pattern Anal. Mach. Intell.}, vol.~16, no.~1, pp. 131--137,
  2004.

\bibitem{bib_Pirsiavash}
H.~Pirsiavash, D.~Ramanan, and C.~Fowlkes, ``Bilinear classifiers for visual
  recognition,'' in \emph{NIPS}, 2009.

\bibitem{bib_He_Ten}
X.~He, D.~Cai, and P.~Niyogi, ``Tensor subspace analysis,'' in \emph{NIPS},
  2005.

\bibitem{HouNZYW14}
C.~Hou, F.~Nie, C.~Zhang, D.~Yi, and Y.~Wu, ``Multiple rank multi-linear {SVM}
  for matrix data classification,'' \emph{Pattern Recognition}, vol.~47, no.~1,
  pp. 454--469, 2014.

\bibitem{bib_WeidenBacher}
U.~Weidenbacher, G.~Layher, P.-M. Strauss, and H.~Neumann, ``A comprehensive
  head pose and gaze database,'' in \emph{Intelligent Environments}, 2007.

\bibitem{bib_Diem}
M.~Diem, S.~Fiel, A.~Garz, M.~Keglevic, F.~Kleber, and R.~Sablatnig, ``Icdar
  2013 competition on handwritten digit recognition (hdrc 2013),'' in
  \emph{Proc. ICDAR}, 2013.

\bibitem{bib_Gourier}
N.~Gourier, D.~Hall, and J.~L. Crowley, ``Estimating face orientation from
  robust detection of salient facial features,'' in \emph{Proc. ICPR Workshop
  on Visual Observation of Deictic Gestures}, 2004.

\bibitem{bib_Steeb}
W.-H. Steeb, \emph{Matrix Calculus and the Kronecker Product with Applications
  and C++ Programs}.\hskip 1em plus 0.5em minus 0.4em\relax World Scientific,
  1997.

\end{thebibliography}
}

\begin{IEEEbiography}[{\includegraphics[width=1in,height=1.25in,clip,keepaspectratio]{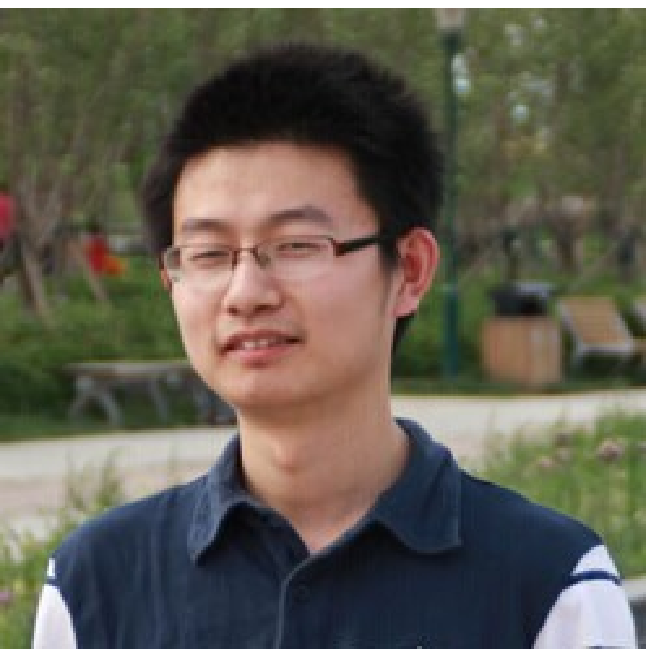}}]{Xiaojun Chang}
	is a Ph.D. student at University of Technology Sydney, under the supervision of Dr. Yi Yang. His research interests include machine learning, data mining and computer vision. His publications appear in proceedings of prestigious international conference like ICML, AAAI, IJCAI and etc. 
\end{IEEEbiography}

\begin{IEEEbiography}[{\includegraphics[width=1in,height=1.25in,clip,keepaspectratio]{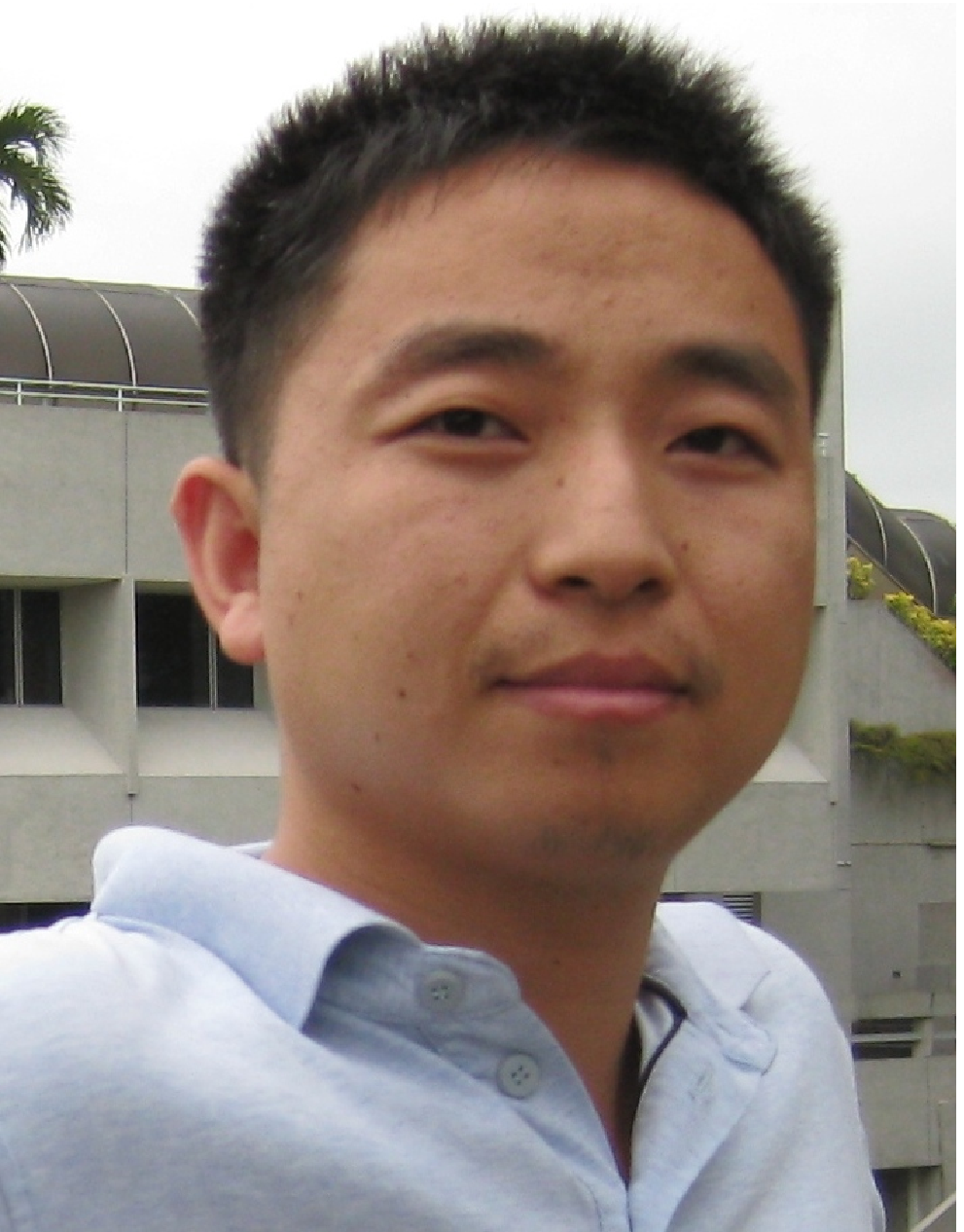}}]{Feiping Nie}
	received the Ph.D. degree in computer science from Tsinghua University, Beijing, China in 2009. He is currently a Professor with Center for OPTical Imagery Analysis and Learning, Northwestern Polytechnical University, Shaanxi, China. His research interests are machine learning and its applications fields, such as pattern recognition, data mining,computer vision, image processing and information retrieval. He has published more than 100 papers in the prestigious journals and conferences like TPAMI, TKDE, ICML, NIPS, KDD, and etc. He is now serving as Associate Editor or PC member for several prestigious journals and conferences in the related fields.
\end{IEEEbiography}

\begin{IEEEbiography}[{\includegraphics[width=1in,height=1.25in,clip,keepaspectratio]{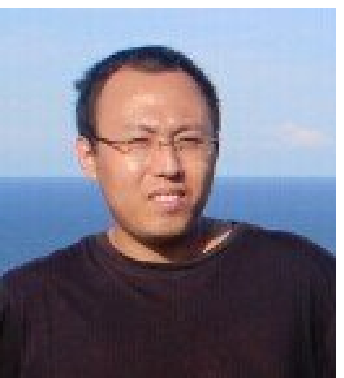}}]{Sen Wang}
	received his Ph.D. in School of Information Technology and Electrical Engineering, The University of Queensland, in 2014. He is currently a Australian Research Council Post-doctoral Research Fellow in DKE Group supervised by A. Prof. Xue Li. His research interest includes data mining, pattern recognition, and relevant applications in medical data and social media analysis.
\end{IEEEbiography}

\begin{IEEEbiography}[{\includegraphics[width=1in,height=1.25in,clip,keepaspectratio]{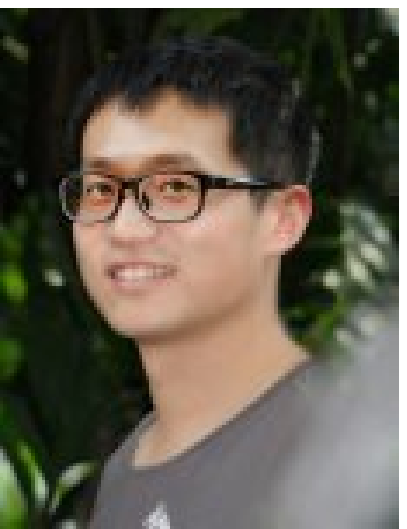}}]{Yi Yang}
	received the Ph.D. degree in computer science from Zhejiang University, Hangzhou, China, in 2010. He is currently a senior lecturer with University of Technology Sydney, Australia. He was a Post-Doctoral Research with the School of Computer Science, Carnegie Mellon University, Pittsburgh, PA, USA. His current research interest include machine learning and its applications to multimedia content analysis and computer vision, such as multimedia indexing and retrieval, surveillance video analysis and video semantics understanding.
\end{IEEEbiography}

\begin{IEEEbiography}[{\includegraphics[width=1in,height=1.25in,clip,keepaspectratio]{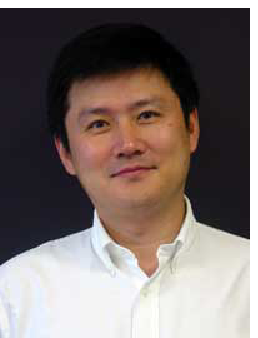}}]{Xiaofang Zhou}
	received the B.S. and M.S. degrees in computer science from Nanjing University, Nanjing, China, and the Ph.D. degree in computer science from The University of Queensland, Brisbane, QLD, Australia, in 1984, 1987 and 1994, respectively. He is a Professor of Computer Science with the University of Queensland. He is the Head of the Data and Knowledge Engineering Research Division, School of Information Technology and Electrical Engineering. His current research interests include spatial and multimedia databases, high performance query processing, web information systems, data mining, bioinformatics, and e-research.
\end{IEEEbiography}

\begin{IEEEbiography}[{\includegraphics[width=1in,height=1.25in,clip,keepaspectratio]{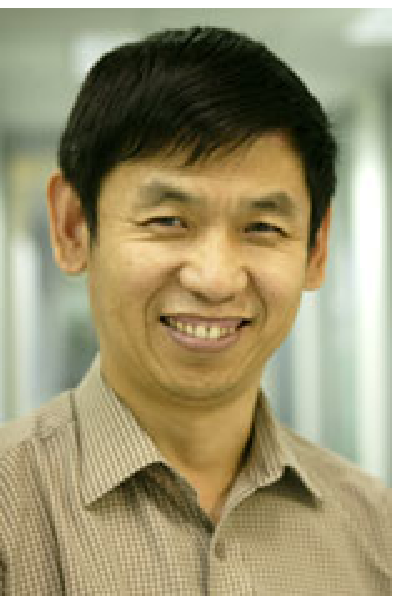}}]{Chengqi Zhang}
	received the Ph.D. degree in computer science from The University of Queensland, Brisbane, Australia, in 1991, and the DrSc degree from Deakin University, Geelong, Australia, in 2002. He is currently with the University of Technology, Sydney (UTS), Sydney, Australia, where he is a research professor of information technology and the director of the UTS Priority Investment Research Center for Quantum Computation and Intelligent Systems. He has published more than 200 refereed research papers. His main research interests include data mining and its applications. He is a fellow of the Australian Computer Society.
\end{IEEEbiography}

\end{document}